\newtheorem{theorem}{Theorem}
\newtheorem{definition}{Definition}
\newcolumntype{C}{>{\centering\arraybackslash}m{1cm}}
\title{Early Concept Drift Detection via Prediction Uncertainty}
\author{
    Pengqian Lu\textsuperscript{\rm 1},
    Jie Lu\textsuperscript{\rm 1}\thanks{$^\dagger$Corresponding author.},
    Anjin Liu\textsuperscript{\rm 1},
    Guangquan Zhang\textsuperscript{\rm 1}
}
\title{My Publication Title --- Single Author}
\author {
    Author Name
}
\title{My Publication Title --- Multiple Authors}
\author {
    % Authors
    First Author Name\textsuperscript{\rm 1},
    Second Author Name\textsuperscript{\rm 2},
    Third Author Name\textsuperscript{\rm 1}
}
\begin{document}

\maketitle

\begin{abstract}
Concept drift, characterized by unpredictable changes in data distribution over time, poses significant challenges to machine
learning models in streaming data scenarios. 
Although error rate-based concept drift detectors are widely used, 
they often fail to identify drift in the early stages when the data distribution changes but error rates remain constant.
This paper introduces the Prediction Uncertainty Index (PU-index),
derived from the prediction uncertainty of the classifier, as a
superior alternative to the error rate for drift detection. Our
theoretical analysis demonstrates that:
(1) The PU-index can detect drift even when error rates remain stable.
(2) Any change in the error rate will lead to a corresponding change in the PU-index.
These properties make the PU-index a more sensitive and robust indicator for drift detection compared to existing methods.
We also propose a PU-index-based Drift Detector (PUDD) that employs a novel Adaptive PU-index Bucketing algorithm for detecting drift.
Empirical evaluations on both synthetic and real-world datasets demonstrate PUDD’s efficacy in detecting drift in structured and image data.

\end{abstract}

\begin{links}
\link{Code}{https://github.com/RocStone/PUDD}
\end{links}

% Uncomment the following to link to your code, datasets, an extended version or similar.
%
% \begin{links}
%     \link{Code}{https://aaai.org/example/code}
%     \link{Datasets}{https://aaai.org/example/datasets}
%     \link{Extended version}{https://aaai.org/example/extended-version}
% \end{links}
\section{Introduction}
In real-world applications, such as medical triage~\cite{medical} or time series forecasting tasks~\cite{Cogra}, the distribution of data may unpredictably change over time. This phenomenon, termed concept drift~\cite{yuan2022recent}, significantly degrades model performance. Moreover, drift also manifests between clients and servers in federated learning tasks~\cite{jiang2022harmofl}, or decision making process \cite{lu2020data} further complicating the learning process.
Error rate-based drift detection is one of the most popular approaches to handling concept drift due to its efficiency \cite{survey}. It continuously monitors the classifier's error rate, issuing an alarm when this rate exceeds a preset threshold \cite{kswin,HDDM}. 
\begin{figure*}[htbp]
  \centering
  \begin{subfigure}{\textwidth}
    \centering
    \includegraphics[width=\linewidth]{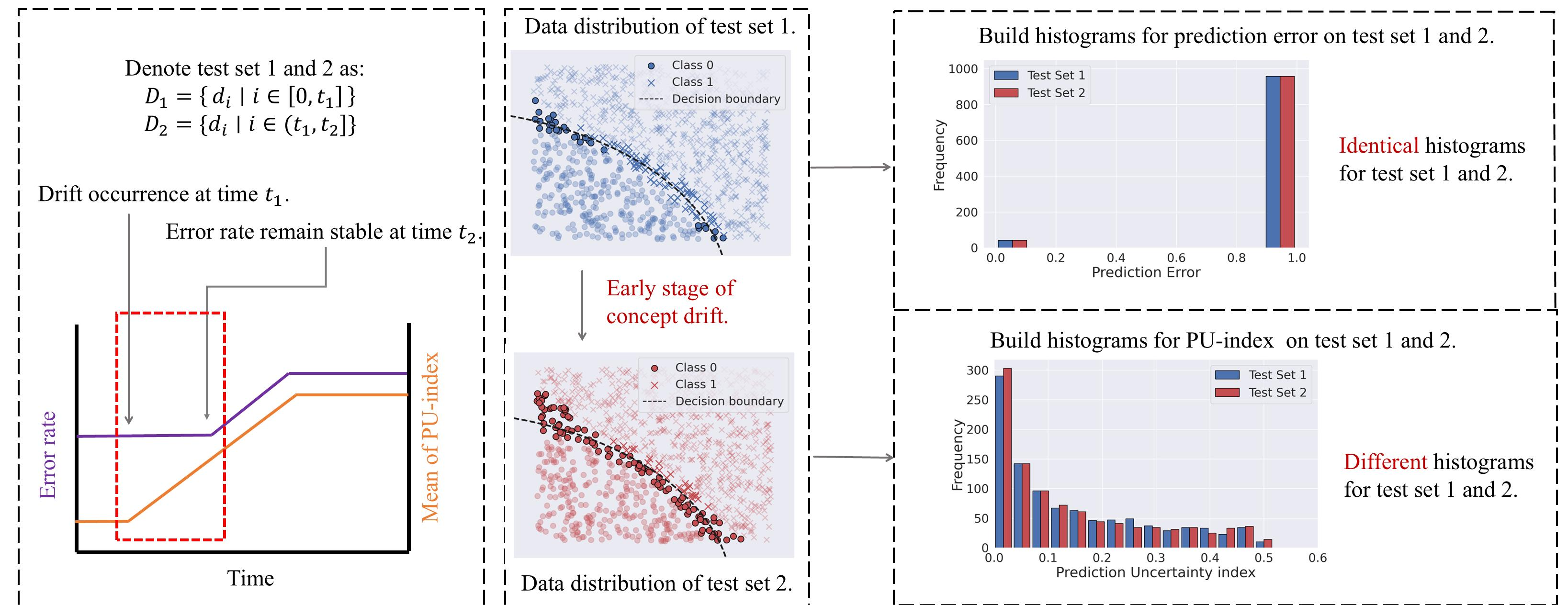}
  \end{subfigure}%
\caption{Illustrative example of the early stage of concept drift when an error rate-based detector fails to detect concept drift, but a prediction uncertainty-based detector can.
  The data around decision boundaries have been highlighted in the middle of the figure, showing the distribution gap between test sets 1 and 2. Such a gap implies concept drift occurrence. However, in this case, the error rates of the two test sets are the same. We also provide a theoretical proof in the Appendix to demonstrate the existence of such a case.  By contrast, the distribution of prediction uncertainty has changed. The example implies that a prediction uncertainty-based detector can detect drift when an error rate-based detector fails.} 
\label{fig:fig1}
\end{figure*}

However, in the early stages of concept drift, 
a model’s error rate may remain stable. 
In such a case, error rate-based drift signals are unable to indicate the changes in data distribution.
To address this, we explore alternative methods to detect distribution changes before a drop in error rate occurs. 
Upon rethinking the logic behind error rate-based drift signals, we realize that the distribution of prediction probability from a given model will change prior to the error rate itself. 
In addition, if the error rate does change, then the distribution of prediction probability must change too.

Prediction probability is just an example of a quantitative measure of predictive uncertainty in models. 
We believe that by clearly defining predictive uncertainty and demonstrating that it is a superior alternative to error rates, 
we can significantly enhance the sensitivity and robustness of drift detection.
Therefore, we introduce the Prediction Uncertainty Index (PU-index), 
which measures the probability assigned by a classifier that an instance does not belong to the true class. Fig. \ref{fig:fig1} shows an illustrative example. 

The objective of this paper is to address two key questions:
(1) Can we identify a more effective drift detection signal that captures changes in data distribution when the error rate remains stable?
(2) Can we theoretically prove that if the drift detection signal shows no significant change, then the model's error rate will also remain stable?
In other words, we seek to develop a superior drift detection signal that can detect drift when the error rate cannot, and to prove that if the new signal fails to detect drift, 
the error rate will also fail to do so. 
We believe that these criteria are essential for evaluating and comparing different drift detection signals.

% The theoretical analysis shows: (1) stability in the PU-index distribution necessarily implies stability in the error rate distribution. This finding suggests that when the PU-index fails to detect concept drift, error-based metrics will also fail to capture any changes.
% (2) Conversely, we establish that stability in error rate does not guarantee stability in the PU-index distribution. This implies that the PU-index has the potential to detect subtle distributional shifts that may go unnoticed by traditional error-based metrics.

The theoretical results provided in this paper show strong evidence for the superiority of the PU-index as a metric for concept drift detection. It exhibits at least equivalent sensitivity to error-based metrics and potentially higher sensitivity in certain scenarios, rendering it a more robust and comprehensive measure for identifying concept drift.

However, it could be argued that a more sensitive detection method might result in a higher false alarm rate, 
potentially overreacting to minor fluctuations in model performance. 
If there is no significant drift in the error rate, why should we still seek to detect it?
To mitigate such concern, we employ the Chi-square test, a robust statistical significance test, to the PU-index for drift detection. This approach helps distinguish between meaningful distributional shifts and inconsequential variations, thereby maintaining the benefits of increased sensitivity while minimizing false alarms.
The p-value obtained from the Pearson's Chi-square test serves as a precise control mechanism for our tolerance to false alarms. By adjusting the significance level ($\alpha$), we can directly modulate the trade-off between sensitivity and false positive rate.
% The sensitivity discussion is provided in the methodology section.

In this paper, we introduce PUDD, a drift detector that uses the Prediction Uncertainty (PU) index to identify concept drift. PUDD employs a sliding window approach to remove outdated data and split the historical stream into two samples. It then applies an Adaptive PU-index Bucketing algorithm to automatically construct histograms that meet our theoretical conditions. Using these histograms, we apply Pearson’s Chi-square test to determine if drift has occurred. 
Our main contributions are:
\begin{enumerate}
\item To the best of our knowledge, this is the first systematic study that compares two different drift detection signals with theoretical analysis.
\item We propose a novel drift detection metric called the PU-index, which is theoretically proven always to outperform error rate-based drift measurements. This provides crucial insight into the PU-index as a more sensitive and robust alternative for concept drift detection.
\item To identify concept drift in streaming data through the PU-index, we propose a Prediction Uncertainty index base drift detector. It comprises an Adaptive PU-index Bucketing algorithm to build a histogram for the PU-index, which meets the condition of our theoretical analysis, to conduct the Pearson's Chi-square test and detect drift. 
% \item A series of comprehensive experiments on both synthetic and real-world shows the superiority of our method in detecting concept drift in structured and image data. We also release the code of experiments on GitHub: www.github.com/anonymous.
\end{enumerate}

\section{Literature Review} \label{sec-review}
In this section, we examine two approaches to concept drift detection: data distribution-based and error rate-based methods. The former directly addresses the root cause of drift—changes in the data distribution—while the latter focuses on variations in the model’s performance, often achieving higher computational efficiency.

\subsection{Data Distribution-based Methods}
Data distribution-based methods measure shifts in the underlying distribution. For instance, a statistical density estimation approach is proposed in \cite{song2007statistical}, enabling the quantification of differences between two samples. Histogram-based techniques frequently serve to represent distributions in high-dimensional feature spaces \cite{liu2017regional}. For example, \cite{boracchi2018quanttree} and \cite{yonekawa2022riden} introduce hierarchical and dynamically adjustable strategies to construct histograms, respectively. Interval formation can also rely on methods like QuadTree \cite{coelho2023concept} and K-means clustering \cite{eikmeans}.
Beyond direct histogram or density estimation, some methods incorporate contextual factors \cite{lu2018structural} or anticipate future distributions. For example, \cite{cobb2022context} uses a context-based CoDiTE function \cite{CoDiTE} to detect drift, while \cite{ddgda} exploits a predictive model for future distributions. There are also approaches that leverage Graph Neural Networks to track and adapt to distribution changes directly \cite{10115471}. Although effective, these strategies can be computationally expensive in high-dimensional data streams \cite{souza2021efficient}. To date, no existing work uses histograms constructed from prediction uncertainty for drift detection, leaving a notable gap in the literature.
To the best of our knowledge, there is no previous work that proposes building a histogram of prediction uncertainty to detect concept drift.

\subsection{Error rate-based methods}
Error rate-based detectors are well-studied and computationally efficient. Approaches such as \cite{ddm}, \cite{EDDM}, and \cite{HDDM} monitor variations in model error rates to detect drift. Adaptive window resizing is explored in \cite{adwin}, and forgetting mechanisms are introduced in \cite{iwe} to weight classifiers dynamically. More recent strategies apply Gaussian Mixture Models to compare windows \cite{yu2023online} or enter reactive states upon detecting alarms \cite{driftsurf}.
Despite their efficiency, error rate-based detectors struggle to identify drift when accuracy remains stable, particularly during its early stages (see Fig. \ref{fig:fig1}). To address this limitation, we propose a prediction uncertainty-based approach, capable of detecting shifts even before error rates degrade. This method enhances early detection capabilities and complements existing drift detection strategies.

\begin{figure*}[t]
    \centering
    \includegraphics[width=\linewidth]{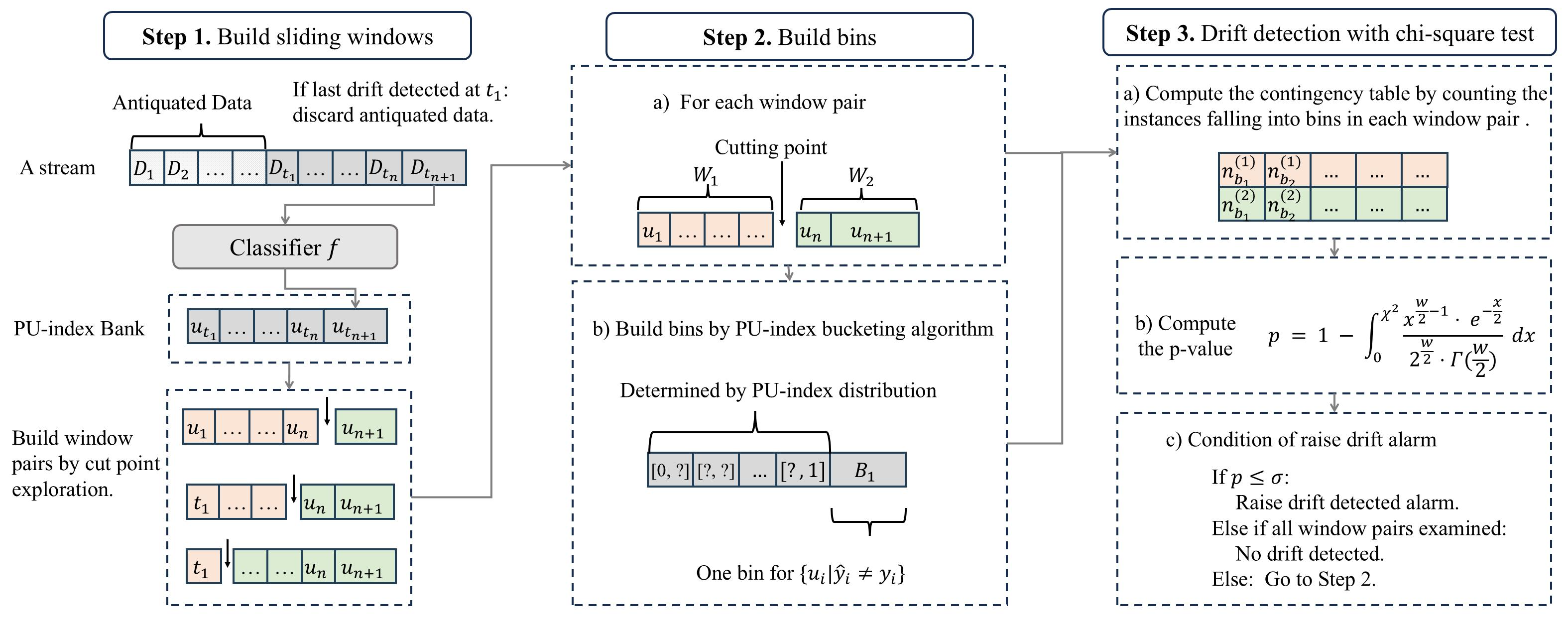}
    \caption{The framework of our proposed algorithm. The sliding window strategy has two components, i.e., antiquated data discard and cutting point exploration as shown on the left. The Adaptive PU-index Bucketing algorithm is shown in the middle. The drift detection process is shown on the right.}
    \label{fig-framework}
\end{figure*}

\section{Preliminaries}\label{sec-pre}

\subsection{Pearson's Chi-square test}\label{sec:chi}

The Pearson’s Chi-square test assesses whether two categorical variables are independent. Its null hypothesis assumes independence, and if the computed p-value falls below a chosen significance level, the null hypothesis is rejected, indicating potential dependence between the variables.

The test’s reliability depends on having sufficiently large observed and expected frequencies. As noted in \cite{boxhunter}, the Chi-square test produces valid results when each observed count exceeds 50 and every expected count exceeds 5. Under these conditions, the distribution of the test statistic closely approximates a normal distribution, enhancing the test’s validity.
The test relies on a contingency table. For the cell in the $i$-th row and $j$-th column, $O_{ij}$ denotes the observed frequency, and its expected frequency is given by:
\begin{equation}
  E_{ij} = \frac{n_i \times n_j}{N},  
\end{equation}
where $n_i$ and $n_j$ are the cumulative frequencies of the respective row and column, and $N$ is the sum of the table.
The Chi-square test statistic is derived as:
\begin{equation}
    \chi^2 = \sum_{i} \sum_j \left( \frac{O_{ij}^2}{E_{ij}} \right) - N.
\end{equation}
Correspondingly, the p-value associated with $\chi^2 $  is:
\begin{equation}
    p = 1 - \int_0^{\chi^2} \frac{x^{\frac{w}{2}-1} \cdot e^{-\frac{x}{2}}}{2^{\frac{w}{2}} \cdot \Gamma\left(\frac{w}{2}\right)}  dx, \\
\end{equation}
where w indicates the degrees of freedom, calculated as:
\begin{equation}
    w = (\text{number of columns} - 1) \times (\text{number of rows} - 1).
\end{equation}

\subsection{Space Partitioning Algorithms}\label{sec-ei}

Space partitioning algorithms are widely studied to build histograms to establish density estimators \cite{silverman2018density}. The key is to split feature space into partitions and count the instances falling into it to build a histogram. 
% For example, kdqTree space partitioning \cite{dasu2006information} splits the feature space based on QuadTree \cite{finkel1974quad}. 
The partitions can be built by QuanTree \cite{boracchi2018quanttree}, Kernel QuantTree \cite{stucchi2023kernel}, or Neural Network \cite{yonekawa2022riden}.
Particularly, we introduce the Ei-kMeans space partitioning algorithm \cite{eikmeans}, which can automatically determine the number and size of partitions.

Given two samples $A$ and $B$, Ei-kMeans initializes centroids by iteratively selecting $N/K$ points from a copy of $A$, where $N=|A|$ and $K$ is the hyperparameter of the kMeans algorithm. Each iteration: (1) Select the point in $A$ with the largest 1-NN distance as $z_i$. (2) Removes $z_i$ and its $N/K$-nearest neighbors from $A$.
This process repeats N/K times, yielding $N/K$ initial centroids.
Then the kMeans algorithm is applied to $A$ with the initial centroids  to derive $K$ clusters denoted as $\{C_i|i\in[1, \frac{N}{K}]\}$.

To ensure that the number of examples in each cluster is larger than 5 to be able to conduct the Chi-square test, an amplify-shrink algorithm is proposed to adjust the number of examples in each cluster. 
Let us denote the number of instances in the clusters as $V=\{C_i||i\in[1,\frac{N}{K}]\}$. 
The distance matrix between the examples in $A$ and the centers of each cluster  is denoted as $M_{dist}\in\mathbb{R}^{N\times K}$, which is amplified by:
   $M_{dist}=M_{dist}\odot \left(\mathbf{1} \cdot e^{\theta\cdot \left(\frac{V}{N-1}\right)}\right),$
 where $\mathbf{1}\in\mathbb{R}^{N\times 1}$ is an all-ones matrix,  $\theta$ denotes the hyperparameter controlling the shape of the coefficient function, . Let $M_{dist}^{ij}$ denote the amplified distance between $i$-th data $A_i$ and $j$-th center $c_j$, the assigned cluster for $A_i$ is defined as  $y_i=\arg\min_{j=1...K} M_{dist}^{ij}.$

After the amplify-shrink algorithm, the final clusters are derived and can be considered as subspaces in the feature space of $A$. The numbers of examples of $A$ and $B$ in the clusters are counted to form a histogram.
% The hyperparameter $\theta$ is chosen by a grid search from $\theta=\{0,0.05,...,1.5\}$. Once a good $\theta$ can ensure every cluster contains more than 50 examples, the search is terminated and outputs the clustering result as the Ei-kMeans histograms. Otherwise, Ei-kMeans reduced the hyperparameter $K$ and conducted a grid search again.

\section{Methodology}\label{sec-method}

This section sets up the problem and provides a theoretical analysis demonstrating the advantages of the PU-index over error rate-based methods. We then introduce a novel sliding window strategy and an Adaptive PU-index Bucketing algorithm for concept drift detection and adaptation.
% We also discuss how the p-value of the Chi-square test controls the false alarm of drift detection.

\subsection{Problem Setup}\label{sec:2.2}
Formally, we represent the streaming data collected during the period $[1,t]$ as $D_{1,t}=\{(x_j, y_j)|j\in[1,t]\}$. If the data is collected in chunks, then the stream includes a set of chunks $D_{1,t}=\{\bar{D}_j|j\in[1,t]\}$, where each chunk $\bar{D}_j=\{(x_{jk}, y_{jk})|k\in[1, M]\}$ includes $M$ examples. Here, $x_{jk}$ represents an instance with $d$ dimensional attributes,  $y_{jk}$ denotes the corresponding label, and $M$ denotes the chunk size.
In this paper, we focus only on the data collected in chunks. 
If the stream $D_{1,t}$ follows a distribution $P_{1,t}(x,y)$, following \cite{survey}, we claim that a drift occurs at time
$t+1$ if 
\begin{equation}
P_{1,t}(x,y)\neq P_{t,\infty}(x,y).
\end{equation}

The goal of concept drift detection is to raise an alarm at time $t+1$ when the distribution of data changes.
The most popular metric for detecting the distribution change is prediction error.
For a classifier $f$, assuming the number of classes is $n$, the classifier will output a prediction probability for each class $\hat{y}\in \mathbb{R}^n$ and $\sum_{i=1}^n \hat{y}=1$. The prediction error of an instance $x_i$ is defined as:
\begin{equation}
    e_i = \mathbb{I}(\hat{y}_j = \arg\max_j f_j(x_i) \neq y_i),
\end{equation} where $\mathbb{I}(\cdot)$ is the indicator function. 

As we mentioned earlier, our motivation is that the prediction probability will intuitively change before the prediction error when drift occurs. In this paper, we measure the prediction probability by the PU-index which is defined as:

\begin{equation}
    u_i = 1-f_{y_i}(x_i),
\end{equation} where $f_{y_i}(x_i)$ denotes the probability predicted by the classifier that $x_i$ belongs to the ground truth class $y_i$.

\subsection{Theoretical Analysis of Error Rate and PU-index} \label{sec-prove}

To rigorously evaluate the efficacy of these two metrics for concept drift detection, we conduct a theoretical comparison from two complementary perspectives.
(1) When the PU-index distribution remains stable, potentially failing to detect concept drift, we investigate whether the error rate distribution exhibits changes that could indicate drift.
(2) Conversely, when the error rate distribution remains constant, we examine whether the PU-index distribution demonstrates changes that might reveal underlying changes in the data stream.

\begin{theorem}\label{theorem1}
    Let $W_1$ and $W_2$ be two windows of a data stream in a multi-class classification problem. If their respective PU-index histograms $H_1$ and $H_2$ are identical, where the histograms are constructed such that the first bin contains all misclassified instances and the remaining bins partition the misclassified instances, then the error rates and error standard deviations of $W_1$ and $W_2$ are equal.
\end{theorem}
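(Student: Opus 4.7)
The plan is to exploit the special structure imposed on the PU-index histograms, namely that the first bin is reserved for correctly classified instances while the remaining bins subdivide only the misclassified ones (this is the reading I adopt to resolve the otherwise tautological clause about partitioning the misclassified set). Under that convention, the count in bin~1 of $H_k$ equals $N_k - m_k$, where $N_k = |W_k|$ and $m_k$ is the number of misclassifications in window $W_k$, while the counts in the remaining bins sum to $m_k$.

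First, I would observe that $H_1 = H_2$ bin-by-bin forces equality of the total count, so $|W_1| = |W_2| =: N$, and equality of the first-bin count, so $N - m_1 = N - m_2$ and hence $m_1 = m_2$. Dividing by $N$ gives $p_1 = p_2$, which is the error-rate equality part of the statement. Note that at this stage everything is pure counting; no probabilistic assumption on the stream is invoked.

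Second, I would appeal to the fact that each indicator $e_i \in \{0,1\}$ defined in the Problem Setup is a Bernoulli variable, so the empirical variance of the errors over a window of size $N$ with sample error rate $\hat{p}$ equals $\hat{p}(1-\hat{p})$ (or the $N/(N-1)$-corrected version if the unbiased estimator is used), a function of $\hat{p}$ alone. Since the two windows share the same $\hat{p}$, their error standard deviations must coincide. No concentration result or statistical test is needed at this step; it is an algebraic identity once $p_1 = p_2$ is established.

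The main obstacle, as I see it, is really justifying that the histogram construction cleanly isolates correctly classified instances in its first bin. For binary classification this is immediate, since $u_i < 1/2$ iff $x_i$ is correctly classified, making $1/2$ a natural bin boundary. For multi-class, however, a correctly classified instance can have $u_i$ as large as $1 - 1/n$, so the PU-index alone does not admit a universal threshold separating correct from incorrect predictions; the bucketing must therefore be aware of the prediction labels (equivalently, of which instances are correctly classified) in order to place a bin boundary precisely at the correct/incorrect divide. Once this structural assumption on the Adaptive PU-index Bucketing algorithm is granted, the remainder of the argument reduces to the counting step and the Bernoulli variance formula described above.
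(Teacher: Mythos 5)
Your proposal is correct and follows essentially the same route as the paper's proof: equate the correctly-classified bin to get equal error rates, then use the $0$--$1$ structure of $e_i$ to write the error standard deviation as $\sqrt{\bar{e}(1-\bar{e})}$, which depends on the error rate alone. The only cosmetic difference is that you read the identical histograms as raw counts (deducing equal window sizes), whereas the paper works directly with bin proportions; your closing remark about the first bin being defined by correctness rather than a PU-index threshold matches the paper's construction of $B_0$.
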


\begin{theorem}\label{theorem2}
    Given a multi-class classification problem, if two windows have equal error standard deviations or error rates, their PU-index histograms, where the first bin contains all correctly classified instances and the remaining bins partition the misclassified instances, may not have identical bin proportions.
\end{theorem}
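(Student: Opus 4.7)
The plan is to prove Theorem \ref{theorem2} by explicit construction, since the assertion is existential: we need only exhibit two windows whose error rates (and hence error standard deviations) agree while their PU-index histograms differ in at least one bin. As a preliminary reduction, I would note that the per-instance error is a Bernoulli indicator, so its standard deviation is $\sqrt{p(1-p)}$, a deterministic function of the error rate $p$. Hence it suffices to design two windows of equal size with identical error rates; the equal-standard-deviation case then follows automatically, collapsing the two hypotheses of the theorem into one.

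For the construction I would work in the binary setting, where the PU-index $u_i = 1 - f_{y_i}(x_i)$ satisfies $u_i > 1/2$ exactly when the instance is misclassified and $u_i \le 1/2$ exactly when it is correctly classified. Let both windows have size $N$ and error rate $1/2$. In $W_1$ I place all $N/2$ misclassified instances with PU-index values concentrated near $1$ (highly confident errors), say in $[0.8, 1]$; in $W_2$ I place the $N/2$ misclassified instances just above $1/2$, say in $(0.5, 0.7]$. Any bin partition of the misclassified range that separates these two regions — for example, the two sub-bins $(0.5, 0.75]$ and $(0.75, 1]$ — yields histograms in which the first bin (correctly classified instances) agrees at $N/2$ in both windows, while the counts in the remaining two bins are reversed between $W_1$ and $W_2$. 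This directly contradicts identity of bin proportions, establishing the claim.

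The main obstacle, though modest, is ensuring that the prescribed PU-index values are realizable by a valid classifier output and consistent with the assigned misclassification status. In the binary case this is immediate: any $f_{y_i}(x_i) \in [0,1]$ is a valid probability, and the equivalences $u_i \le 1/2 \Leftrightarrow$ correctly classified and $u_i > 1/2 \Leftrightarrow$ misclassified make the construction automatically self-consistent. A secondary subtlety is that the bin partition must be fine enough to resolve the two concentrations, but since the bin boundaries are a design choice made together with the construction, this is straightforward to arrange, and the argument extends verbatim to any $n$-class setting by keeping all class probabilities other than the true class equal. The resulting example shows that the error rate (and its Bernoulli-determined standard deviation) cannot resolve within-error changes in classifier confidence that the PU-index histogram does detect, which is exactly the asymmetry Theorem \ref{theorem2} claims.
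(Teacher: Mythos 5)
Your proposal is correct and follows essentially the same route as the paper's own proof: an explicit binary-classification counterexample in which both windows have error rate $\tfrac{1}{2}$ (hence equal standard deviations, since the 0--1 error's standard deviation is $\sqrt{\bar e(1-\bar e)}$), while the misclassified instances' PU-index mass sits in different sub-bins of $(0.5,1]$, so the histograms cannot have identical bin proportions. Your explicit reduction of the ``or'' hypothesis via the Bernoulli standard deviation and your remark on realizability of the classifier outputs are minor refinements of the same argument.
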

Due to the page limit, the proofs are provided in the Appendix.
These theorems lead to the following conclusions:
(1) Theorem \ref{theorem1} demonstrates that when the PU-index distribution remains stable, the error rate and the error standard deviation also remain constant. This implies that if the PU-index fails to detect concept drift, error-based metrics will also fail to detect it.
(2) Theorem \ref{theorem2} establishes that even when error rates and error standard deviations are equal between two windows, the PU-index distributions may differ. This suggests that the PU-index has the potential to detect subtle changes in the data distribution that are not captured by traditional error-based metrics.
These findings show that the PU-index offers at least the same sensitivity as error-based metrics and potentially higher sensitivity in certain scenarios, making it a more robust and comprehensive measure for detecting concept drift in streaming data environments.

\subsection{Sliding Window Strategy and Adaptive PU-index Bucketing Algorithm}

To detect concept drift using the PU-index without making it overly sensitive, we apply the Chi-square test to examine the PU-index distribution. The hypotheses are:

\textbf{Null Hypothesis ($\mathcal{H}_0$)}: The PU-index distribution does not change over time, indicating no concept drift.

\textbf{Alternative Hypothesis ($\mathcal{H}_1$)}: The PU-index distribution changes over time, indicating concept drift.

If the Chi-square statistic exceeds the critical value at the chosen significance level, we reject $\mathcal{H}_0$ and conclude that concept drift has occurred. Otherwise, we detect no significant drift. Thus, detecting concept drift using Chi-square involves two key steps: (1) partitioning the data stream into two windows and (2) constructing a histogram of the collected PU-index values.

We adopt a sliding window strategy to handle online streaming data. Let $D_{1,t}=\{\bar{D}_j|j\in[1,t]\}$ denote PU-index chunks collected from the start of the stream, potentially from different distributions. For instance, suppose $D_{1,t_1}$ and $D_{t_1,t}$ differ in distribution. If a new chunk $\bar{D}_{t+1}$ matches the distribution of $D_{t_1,t}$, no drift should be detected. However, keeping antiquated data $D_{1,t_1}$ could trigger a false alarm, since $D_{1,t_1}$ and $D_{t_1,t+1}$ differ in distribution.

To solve this "antiquated distribution" problem, we discard outdated data after detecting a drift at time $t_1$. Subsequent drift detection uses only $D_{t_1,t+1}$, avoiding false alarms caused by old data.
After discarding antiquated data, we must determine how to form two windows on the current substream. We do this by exploring all possible cutting points $r \in [t_1,t+1]$. Thus, $D_{t_1,t+1}$ is split into $D_{t_1,r}$ and $D_{r,t+1}$ for the Adaptive PU-index Bucketing algorithm. The sliding window is illustrated on the left side of Fig. \ref{fig-framework}.

Theoretical analysis shows that misclassified instances' PU-indices must be grouped into the same bin. For counterpart, we use Ei-kMeans to form bins that meet the Chi-square test requirements.  We call this the Adaptive PU-index Bucketing algorithm, illustrated in the middle of Fig. \ref{fig-framework}.

\subsection{PU-index based Drift Detector}

In this subsection, we introduce the overall of our method. Firstly, given a substream containing the recent chunks' PU-index $u_{t_1,t}$, we explore all cutting points $r\in [t_1,t]$. Based on the cutting points, we have $t-t_1$ window pairs, denoted as $\boldsymbol{u}_{t_1,r}$, and $\boldsymbol{u}_{r,t}$.
Then we defined the PU-index pairs for correctly and wrongly classified instances as:
\begin{align}
    \boldsymbol{u}^C_{t_1,r}&=\{u|u_i\in \boldsymbol{u}_{t_1,r}\land \hat{y}_i=y_i\}, \\
    \boldsymbol{u}^C_{r,t}&=\{u|u_i\in \boldsymbol{u}_{r,t}\land \hat{y}_i=y_i\}, \\
    \boldsymbol{u}^M_{t_1,r}&=\{u|u_i\in \boldsymbol{u}_{t_1,r}\land \hat{y}_i\neq y_i\}, \\
    \boldsymbol{u}^M_{r,t}&=\{u|u_i\in \boldsymbol{u}_{r,t}\land \hat{y}_i\neq y_i\}.
\end{align}
These four equations denote the PU-index of correctly classified instances on the first and second window, and the PU-index of misclassified instances on the first and second window respectively. 
% Therefore, $\boldsymbol{u}_{t_1,r}=\{\boldsymbol{u}^C_{t_1,r}, \boldsymbol{u}^M_{t_1,r}\}$ and $\boldsymbol{u}_{r,t}=\{\boldsymbol{u}^C_{r,t},\boldsymbol{u}^M_{r,t}\}$. 
Next, we compute the contingency table $T\in\mathbb{R}^{2\times (K+1)}$, where $K$ is a hyperparameter in Ei-kMeans.
To calculate $T$, firstly, we apply the Adaptive PU-index Bucketing algorithm on $\boldsymbol{u}^C_{t_1,r}$ to build a histogram. Then we count the instances from $\boldsymbol{u}_{t_1,r}^C$ falling into the bins of the histogram and fill them into $T_{1i}$ where $i\in[1, K]$. Likewise, we count the examples of $\boldsymbol{u}^C_{r,t}$ falling into the previously obtained bins and fill them into $T_{2i}$. Finally, we fill the $T_{1,K+1}$, and $T_{2,K+1}$ by the size of $\boldsymbol{u}^M_{t_1,r}$ and $\boldsymbol{u}^M_{r,t}$.
Then we define the expected frequency of $T_ij$ as:
\begin{equation}
E_{ij}=\frac{\sum_{j=1}^{K+1}T_{ij}\times\sum_{i=1}^{2}T_{ij}}{\sum_{ij}T_{ij}}. \label{eq-e}
\end{equation}

The Chi-square test statistic is defined as:
\begin{equation}
    \chi^2 = \sum_{i} \sum_j \left( \frac{T_{ij}^2}{E_{ij}} \right) - \sum_{ij}T_{ij}.
    \label{eq-test-statistic}
\end{equation}

\begin{figure}[tb]
  \centering
  \begin{subfigure}{.9\linewidth}
    \centering
    \includegraphics[width=\linewidth]{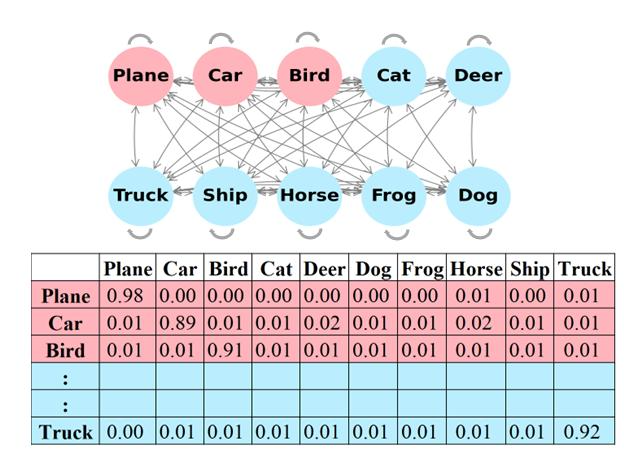}
  \end{subfigure}%
\caption{Illustrative example of generating CIFAR-10-CD through the Markov process. The classes marked in red represent the user's initial interest and are considered positive labels.  All other classes are considered negative labels. } 
\label{fig-cifar10-generate}
\end{figure}

\begin{table*}[tb]
\centering
\setlength{\tabcolsep}{1mm}
\begin{tabular}{c|l|cccccc|cccccc}
\toprule
\multicolumn{8}{c|}{Incremental Training} & \multicolumn{6}{c}{Training only at Initialization or Adaptation} \\
\toprule
Classifier & ddm name & airline-I & elec2-I & mixed-I & ps-I & sea0-I & sine-I & airline-O & elec2-O & mixed-O & ps-O & sea0-O & sine-O \\
\hline
\multirow{10}{*}{DNN} 
& ADWIN & 61.65 & 71.94 & 78.45 & 71.12 & 97.70 & 79.27 & 59.36 & 69.71 & 84.41 & 65.95 & 95.18 & 86.59 \\
& DDM & 61.29 & 71.02 & \textbf{80.72} & 71.37 & 96.86 & \underline{\textbf{86.41}} & 56.79 & 69.51 & 83.03 & 69.35 & 93.55 & 80.32 \\
& EDDM & 62.13 & 70.39 & 78.01 & 65.21 & 96.89 & 75.34 & \textbf{61.17} & 69.22 & 71.16 & 67.57 & 92.18 & 76.54 \\
& HDDM-A & 62.70 & 71.16 & 76.70 & 68.97 & 97.84 & 81.65 & 59.13 & 68.98 & 84.27 & 67.81 & 95.11 & \underline{\textbf{86.82}} \\
& HDDM-W & 61.50 & 71.23 & 77.07 & 68.74 & 97.84 & \textbf{85.43} & \underline{\textbf{62.42}} & 68.48 & 84.32 & 66.89 & 92.24 & 86.57 \\
& KSWIN & 63.02 & 70.56 & 78.58 & 70.80 & 97.88 & 79.00 & \textbf{61.34} & 69.08 & \textbf{84.43} & 66.46 & 91.60 & 83.66 \\
& PH & 62.15 & 72.25 & 79.49 & 70.86 & 97.73 & 78.07 & 60.36 & 68.24 & 84.36 & 68.83 & \textbf{95.21} & \textbf{86.88} \\
\cline{2-14}
\rule{0pt}{8pt}
& PUDD-1 & \textbf{63.31} & \textbf{74.92} & 77.39 & \underline{\textbf{72.25}} & \textbf{97.94} & \textbf{86.19} & 60.90 & 69.35 & 82.65 & \underline{\textbf{71.47}} & 94.89 & 83.39 \\
& PUDD-3 & \textbf{63.21} & \underline{\textbf{74.93}} & \textbf{80.05} & \textbf{72.23} & \textbf{98.04} & 85.12 & 60.16 & 68.98 & \textbf{84.65} & \textbf{70.37} & \textbf{95.99} & 84.97 \\
& PUDD-5 & \underline{\textbf{63.35}} & \textbf{74.92} & \underline{\textbf{82.81}} & \textbf{72.24} & \underline{\textbf{98.23}} & 82.51 & 60.19 & 68.68 & \underline{\textbf{84.90}} & \textbf{70.20} & \underline{\textbf{96.29}} & 85.09 \\
\hline
\rule{0pt}{8pt}
\multirow{10}{*}{GNB} 
& ADWIN & 50.17 & 68.90 & \textbf{83.95} & 70.06 & 94.18 & 82.49 & \textbf{54.66} & \textbf{68.30} & 83.62 & 68.78 & 93.93 & 82.45 \\
& DDM & 52.94 & 67.75 & 83.82 & 69.63 & 93.97 & 82.07 & 52.43 & 67.60 & \textbf{83.59} & 67.52 & 93.48 & 81.90 \\
& EDDM & \underline{\textbf{62.72}} & 67.73 & 83.19 & 70.04 & 94.06 & 83.12 & 54.11 & 67.64 & 74.65 & \textbf{70.04} & \textbf{94.06} & 73.88 \\
& HDDM-A & 52.80 & 67.73 & 83.92 & 70.87 & 94.28 & 83.25 & \textbf{\underline{55.62}} & \textbf{67.73} & 83.66 & \underline{\textbf{71.24}} & 93.96 & \textbf{83.36} \\
& HDDM-W & 48.66 & 67.73 & 83.91 & 71.06 & 92.11 & 82.83 & 48.62 & 67.71 & 83.60 & 69.53 & 91.63 & 83.25 \\
& KSWIN & 49.84 & 67.87 & 83.92 & 71.23 & 91.72 & 81.88 & 48.83 & 67.63 & 83.59 & 67.99 & 90.02 & 81.32 \\
& PH & 49.35 & 70.12 & 83.88 & 70.36 & 94.34 & \underline{\textbf{83.54}} & 49.02 & \underline{\textbf{70.04}} & 83.61 & 68.67 & 94.12 & 83.10 \\
\cline{2-14}
\rule{0pt}{8pt}
& PUDD-1 & \textbf{53.57} & \textbf{70.85} & 82.99 & \underline{\textbf{71.88}} & \textbf{94.61} & 83.12 & 51.05 & 62.76 & 79.23 & \textbf{71.13} & \textbf{94.25} & 81.48 \\
& PUDD-3 & \textbf{53.03} & \underline{\textbf{70.85}} & \textbf{83.92} & \textbf{71.59} & \textbf{94.81} & \textbf{83.39} & 49.45 & 59.32 & 83.58 & \textbf{71.20} & \textbf{94.60} & \textbf{83.80} \\
& PUDD-5 & 52.16 & \textbf{70.69} & \underline{\textbf{84.12}} & \textbf{71.59} & \underline{\textbf{94.85}} & \textbf{83.43} & \textbf{54.37} & 59.44 & \underline{\textbf{83.96}} & 70.40 & \underline{\textbf{94.62}} & \textbf{83.38} \\
\hline
\rule{0pt}{8pt}
\multirow{10}{*}{VFDT} 
& ADWIN & 60.39 & \textbf{73.98} & 84.30 & 71.69 & 94.77 & 87.11 & 61.22 & \textbf{74.29} & 83.54 & 68.78 & 92.96 & \textbf{85.74} \\
& DDM & 60.16 & \underline{\textbf{74.82}} & 84.14 & 70.68 & 94.86 & 86.50 & 59.28 & \underline{\textbf{74.75}} & 82.31 & 67.53 & 93.63 & 82.23 \\
& EDDM & 61.19 & 73.81 & 83.15 & 70.06 & 94.17 & 85.59 & \underline{\textbf{62.31}} & 73.81 & 73.73 & 70.06 & 93.60 & 77.72 \\
& HDDM-A & 60.95 & 73.90 & \textbf{84.40} & 70.84 & \textbf{95.20} & \textbf{87.53} & 60.29 & 73.83 & \textbf{83.66} & \underline{\textbf{71.24}} & 93.93 & 85.21 \\
& HDDM-W & 61.11 & 73.80 & \textbf{84.40} & 70.99 & 93.50 & \textbf{87.45} & \textbf{61.92} & 73.73 & 83.59 & 69.54 & 91.76 & 85.28 \\
& KSWIN & \textbf{61.30} & \textbf{74.10} & \underline{\textbf{84.42}} & 71.27 & 93.40 & 86.22 & \textbf{62.06} & \textbf{74.10} & 83.57 & 67.51 & 89.84 & 82.60 \\
& PH & 60.95 & 73.70 & 83.56 & 70.88 & 94.69 & 87.15 & 60.97 & 73.99 & 83.30 & 68.67 & 93.85 & 85.19 \\
\cline{2-14}
\rule{0pt}{8pt}
& PUDD-1 & 61.38 & 73.86 & 84.25 & \textbf{71.77} & 95.13 & 87.33 & 61.16 & 69.79 & 82.13 & \textbf{71.13} & \textbf{94.10} & 82.21 \\
& PUDD-3 & \underline{\textbf{61.57}} & 73.84 & 83.94 & \underline{\textbf{71.79}} & \textbf{95.21} & 87.42 & 59.90 & 69.79 & 84.04 & \textbf{71.20} & \underline{\textbf{94.63}} & \textbf{85.81} \\
& PUDD-5 & \underline{\textbf{61.57}} & 73.64 & 84.01 & \underline{\textbf{71.79}} & \underline{\textbf{95.24}} & \underline{\textbf{87.63}} & 57.04 & 71.83 & \underline{\textbf{84.16}} & 70.40 & \textbf{94.56} & \underline{\textbf{86.01}} \\
\hline
\end{tabular}
\caption{Comparative analysis against classic drift detectors across 3 synthetic and 3 real-world datasets. The top 3 results are highlighted in bold and the top 1 results are in both bold and underlined. PUDD-$x$ represents the threshold set as $10^{-x}$ for our method. The ps is short of powersupply dataset. Results for dataset sea10 and sea20 is provided in Appendix.}
\label{baseline}
\end{table*}

\begin{table}[t]
\setlength{\tabcolsep}{1mm}
\begin{tabular}{l|llllll}
\hline
\textbf{ddm name} & \textbf{airline}& \textbf{elec2}& \textbf{mixed}& \textbf{ps} & \textbf{sea0} & \textbf{sine} \\ \hline
AMF & 38.56 & 66.24 & 49.49 & 69.63 & 93.67 & 49.52 \\
IWE & 38.02 & 68.90 & 49.47 & 64.10 & 93.14 & 49.51 \\
NS& \textbf{67.91}& 76.42 & 81.09 & 72.39 & 93.54 & 91.01 \\
ADLTER& \textbf{\underline{70.00}} & 76.10 & 87.63 & 72.48 & 93.40 & \textbf{92.18}\\
MCD-DD& 63.65 & 69.81 & 86.68 & 71.66 & 97.66 & 90.21 \\ \hline
PUDD-1& 63.78 & \textbf{\underline{77.28}} & \textbf{\underline{89.51}} & \textbf{72.68}& \textbf{98.47}& \textbf{94.52}\\
PUDD-3& \textbf{64.62}& \textbf{76.77}& \textbf{89.47}& \textbf{\underline{72.79}} & \textbf{98.44}& \textbf{\underline{94.76}} \\
PUDD-5& 64.45 & \textbf{76.92}& \textbf{89.37}& \textbf{72.74}& \textbf{\underline{98.49}} & 90.90 \\ \hline
\end{tabular}
\caption{Comparison with SOTA methods. The dataset ps is short for powersupply. The results for dataset sea10 and sea20 is provided in Appnedix. The table shows that our methods PUDD in achieved top-1 in 5 out of 6 datasets, implying the effectiveness of PUDD compared with SOTA methods.}
\label{sota}
\end{table}

Finally, the p-value is computed by:
\begin{equation} \label{eq-chi}
    p = 1 - \int_0^{\chi^2} \frac{x^{\frac{K}{2}-1} \cdot e^{-\frac{x}{2}}}{2^{\frac{K}{2}} \cdot \Gamma\left(\frac{K}{2}\right)} \, dx, 
\end{equation}
Based on the Equation (\ref{eq-e}-\ref{eq-chi}), we can compute the p-value for each window pair $D_{t_1,r}$, and $D_{r,t}$.
If the minimum p-value among all window pairs is smaller than a predefined threshold $\sigma$, we raise a drift detected alarm. It is important to clarify that the specified threshold controls the Type I error rate for each individual window pair test rather than the overall Type I error across the entire substream. Consequently, we do not employ multiple comparison adjustments since our statistical guarantees apply at the single-test level rather than the family-wise level.
The pseudo-code and time complexity analysis is provided in the Appendix.

\section{Experiments}\label{sec-exp}
In this section, we introduce the settings and results of the experiments in our paper.
% The datasets and baselines are introduced in Section \ref{sec-datasets}. The comparison with both classic and SOTA methods, as well as ablation studies for PUDD, are provided in Section \ref{sec-exp-main}.
The details of implementation, datasets, baselines, and the critical difference diagrams \cite{cd} for the experiments in this paper are introduced in the Appendix.

\subsection{Datasets and Baselines}\label{sec-datasets}

We propose CIFAR-10-CD, a synthetic concept drift image dataset with an illustrative example in Fig. \ref{fig-cifar10-generate}, to simulate user interests changing via a Markov process. Initially, three CIFAR-10 classes are marked positive, with interest shifts occurring probabilistically (e.g. 1\% chance of Plane to Horse transfer). Our experiments utilize 3 real-world datasets (airline\cite{airline}, elec2\cite{elec}, powersupply\cite{powersupply}) and 4 synthetic sets (sine\cite{ddm}, mixed\cite{ddm}, CIFAR-10-CD, sea variants\cite{moa}). We compare against 7 classic detectors (ADWIN\cite{adwin}, DDM\cite{ddm}, EDDM\cite{EDDM}, HDDM-A\cite{HDDM}, HDDM-W\cite{HDDM}, KSWIN\cite{kswin}, PH\cite{ph}) and 5 SOTA methods (MCDD\cite{mcdd}, AMF\cite{amf}, IWE\cite{iwe}, NS\cite{ns},
% AUAI\cite{auai},
and ADLTER\cite{adlter}).

\subsection{Comparison with Baselines and Ablation Studies}\label{sec-exp-main}
% In this subsection, we provided the comparison with 7 classic drift detectors and 5 SOTA methods on 9 datasets (including the variant of the SEA dataset). The experiment result for dataset SEA10 and SEA20 is provided in Appendix due to page limit. We tested all the baselines and our method on 3 different classifiers, including DNN (architecture introduced in appendix), Gaussian Naive Bayes (GNB) \cite{scipy}, and VFDT\cite{vfdt}. The classifier is trained incrementally (denoted as dataset-I) or trained only once until the alarm way (denoted as dataset-O). The experiment results of comparison with classic drift detectors are reported in Table \ref{baseline}. The results of the comparison with the SOTA methods are reported in Table \ref{sota}. Due to the learning difficulty of the CIFAR-10-CD dataset, we only use an incremental training way to evaluate our method and all other baselines, and the result is reported in Fig. \ref{fig:CIFAR-10}. 
% Our method is annotated with \textbf{PUDD-X} where X indicates the value of the threshold is set as $\boldsymbol{10^{-X}}$.
% Based on these experiments, we obtained 6 observations. In the following contents, we introduce 4 observations and leave the rest two in the Appendix.

In this subsection, we compare our method with 7 classic drift detectors and 5 SOTA methods on 9 datasets (including a variant of the SEA dataset). Due to page constraints, results for SEA10 and SEA20 appear in the Appendix. We evaluate all methods using three classifiers—DNN (architecture detailed in the Appendix), Gaussian Naive Bayes (GNB) \cite{scipy}, and VFDT \cite{vfdt}—under two training regimes: incremental (dataset-I) and one-time training at an alarm (dataset-O).

Results for the comparison with classic detectors are presented in Table \ref{baseline}, and those for SOTA methods are given in Table \ref{sota}. For CIFAR-10-CD, due to its learning complexity, we only use incremental training and report results in Fig. \ref{fig:CIFAR-10}. Our method is denoted as \textbf{PUDD-X}, where X represents the exponent in $10^{-X}$.
Based on these experiments, we derive 6 observations. We introduce 4 of them here and leave the remaining 2 in the Appendix.

% We compared our method (PUDD) with 7 classic and 6 state-of-the-art drift detectors across 9 datasets, including a SEA variant. Experiments utilized three classifiers (DNN, GNB, VFDT \cite{vfdt}) in both incremental (dataset-I) and one-time training (dataset-O) modes. Results for classic detectors are in Table \ref{baseline}, while SOTA comparisons are in Table \ref{sota}. For CIFAR-10-CD, only incremental training was used due to its complexity, with results in Fig. \ref{fig:CIFAR-10}. Our method is denoted as \textbf{PUDD-X}, where X represents the threshold $\boldsymbol{10^{-X}}$. From these experiments, we derived 6 observations, 4 of which are discussed in the following, with the others in the Appendix.

\textbf{Observation 1: our method shows stronger performance compared to classic drift detectors} as evidenced by the results presented in Table \ref{baseline}, Fig. \ref{fig:CIFAR-10}, and additional results in Appenidx.  In incremental learning settings, PUDD ranks first in 17 out of 24 cases across different datasets and classifiers, and it is in the top 3 in 20 out of 24 cases. When trained only initially or with adaptation, it still performs well. It achieves first rank in 15 cases and top 3 in 19 cases. This shows that PUDD is particularly effective with incremental training. Results in Fig. \ref{fig:CIFAR-10} show PUDD outperformed all the baselines, which demonstrates the superiority of our method in detecting the concept drift in the image dataset. 
% The critical difference diagram of the experiment in the Appendix shows that the PUDD is statistically significantly outperforms SOTA methods.

\textbf{Observation 2: PUDD performs better with a smaller threshold} as revealed in Table \ref{baseline} and additional results in Appendix. In incremental learning scenarios, PUDD-1, PUDD-3, and PUDD-5 achieve top 1 in 5, 5, and 8 cases respectively. When tested in training only once until alarm way, PUDD-1, PUDD-3, and PUDD-5 achieved top 1 in 2, 6, and 8 cases respectively. PUDD consistently shows improved performance at lower thresholds in both scenarios. As detailed in the Sensitivity of PU-index section, a drift alarm triggers when the p-value is below the threshold, with lower thresholds indicating stricter conditions for alarm detection. Therefore, PUDD's better performance with smaller thresholds suggests a high sensitivity to drift.

\textbf{Observation 3: PUDD shows very competitive performance compared to SOTA methods.}
As shown in Table \ref{sota} and additional results in Appendix, our method attains the top rank in 7 out of 8 cases. On certain datasets, this improvement is particularly pronounced. For instance, PUDD-5 achieves a 98.49\% accuracy, which is 2.8\% higher than the best SOTA method. The only exception occurs in the airline dataset, where NS and ADLTER outperform PUDD.

This discrepancy can be explained by the airline dataset’s tabular nature and its numerous attributes, which are more effectively modeled through tree-based ensemble learning utilized by these SOTA methods. Moreover, these methods adapt to drift by adjusting ensembles rather than discarding and retraining them. In contrast, PUDD relies on retraining classifiers solely on recent data, which may not be suitable for attribute-rich datasets like the airline dataset. Nevertheless, for all other datasets, the results confirm that PUDD surpasses SOTA methods, thereby underscoring its overall superiority.

\textbf{Observation 4: The Adaptive PU-index Bucketing algorithm outperforms Ei-kMeans.}  
Figure \ref{fig-ek} shows that PUDD  surpasses Ei-kMeans across various datasets, classifier training methods, and threshold settings. The critical difference diagram in the Appendix shows that improvements at thresholds $10^{-3}$ and $10^{-5}$ are statistically significant.

In summary, these results confirm the theoretical benefits of the PU-index for drift detection. PUDD outperforms both classic and SOTA detectors, and the Adaptive PU-index Bucketing algorithm shows significant improvements over Ei-kMeans. This validates the PU-index as a sensitive, robust indicator capable of detecting drift even when error rates remain unchanged, thereby overcoming a major shortcoming of error rate-based approaches.

\begin{figure}[tb]
% \vspace{-6mm}
\centering %图片居中
\includegraphics[width=\linewidth]{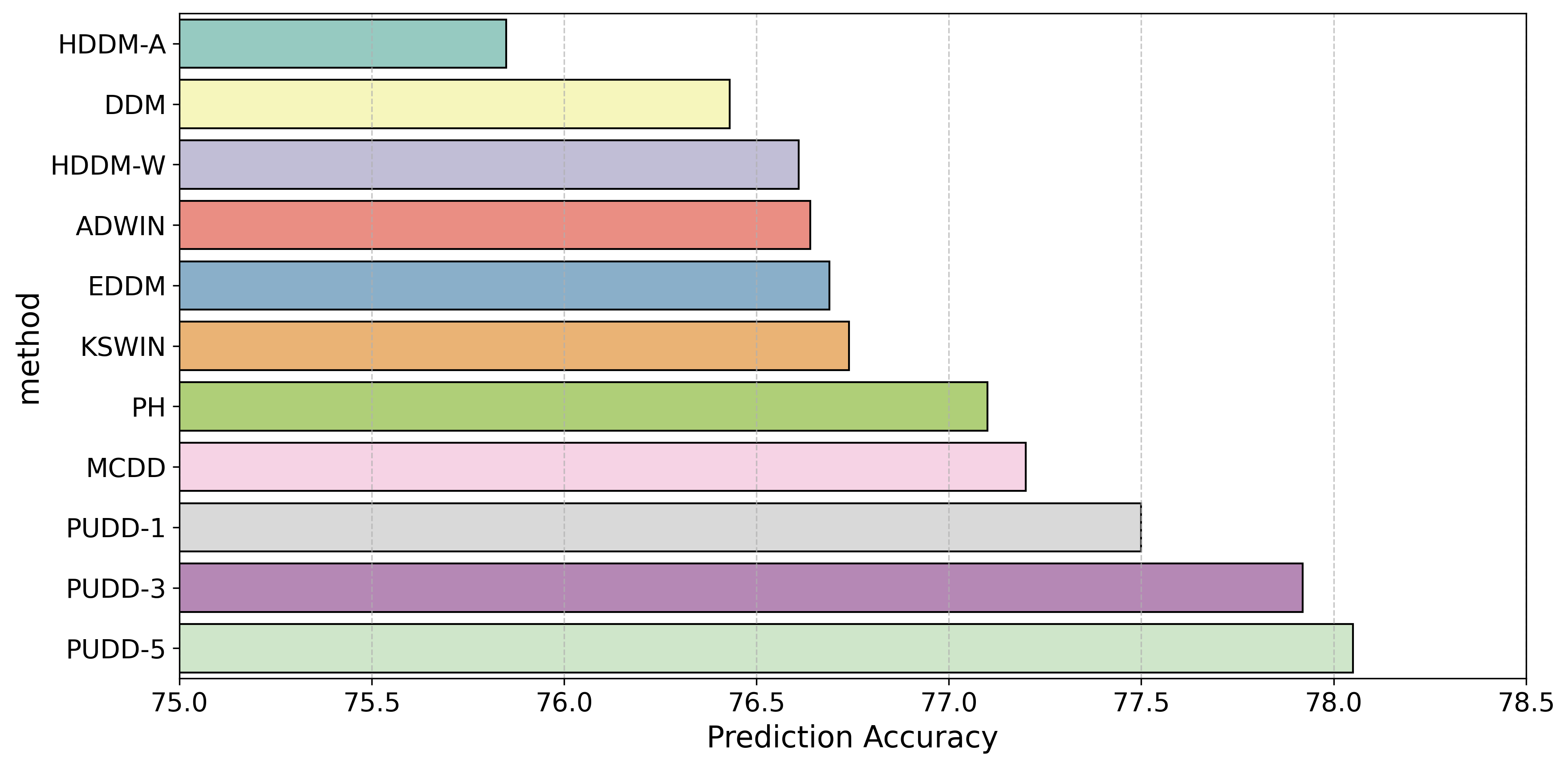}
\caption{Comparison with baselines on CIFAR-10-CD, excluding methods unable to detect drift in image datasets.} 
\label{fig:CIFAR-10} %用于文内引用的标签
\end{figure}

\begin{figure}[tb]
  \centering
  \begin{subfigure}{\linewidth}
    \centering
    \includegraphics[width=\linewidth]{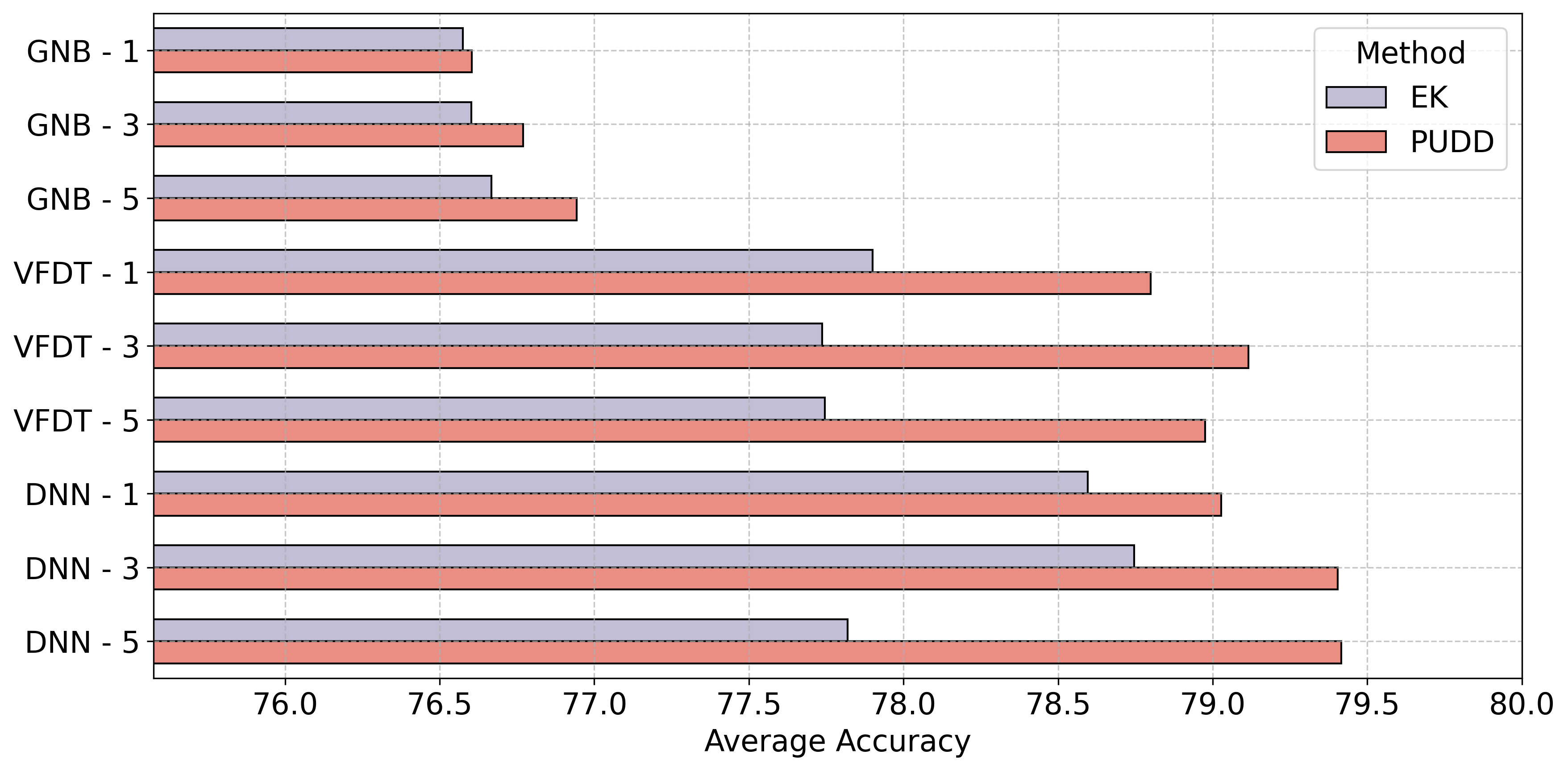}
  \end{subfigure}%
\caption{Accuracy comparison between PUDD (using Adaptive PU-index Bucketing) and Ei-kMeans (EK). We show average accuracy across 9 datasets using 3 classifiers.} 
\label{fig-ek}
\end{figure}

\section{Conclusion and Future Work}\label{sec:conclusion}
In our study, we demonstrated that the PU-index, as opposed to the error rate, is a more effective measure for detecting concept drift in machine learning models. We utilized the Adaptive PU-index bucketing algorithm to partition the PU-index and the Chi-square test to detect concept drift.
We also introduced a technique for inducing concept drift in image datasets by simulating changes in user interest. We validated our method through experiments on both synthetic and real-world datasets. 
Future work should focus on automating drift alarm threshold determination, as current methods rely on manual settings that may not remain optimal over time.  Our research also uncovers a method for generating multi-stream concept drift in image datasets by emulating shifts in user interests using Markov matrices, offering valuable insights for research in multistream concept drift learning.

\section*{Acknowledgments}\label{sec:ack}
This work was supported by the Australian Research Council through the Laureate Fellow Project under Grant FL190100149.

\bibliography{Extended-version}

\begin{thebibliography}{46}
\providecommand{\natexlab}[1]{#1}

\bibitem[{Baena-Garc{\i}a et~al.(2006)Baena-Garc{\i}a, del Campo-{\'A}vila, Fidalgo, Bifet, Gavalda, and Morales-Bueno}]{EDDM}
Baena-Garc{\i}a, M.; del Campo-{\'A}vila, J.; Fidalgo, R.; Bifet, A.; Gavalda, R.; and Morales-Bueno, R. 2006.
\newblock Early drift detection method.
\newblock In \emph{Fourth international workshop on knowledge discovery from data streams}, volume~6, 77--86. Citeseer.

\bibitem[{Bifet and Gavalda(2007)}]{adwin}
Bifet, A.; and Gavalda, R. 2007.
\newblock Learning from time-changing data with adaptive windowing.
\newblock In \emph{Proceedings of the 2007 SIAM international conference on data mining}, 443--448. SIAM.

\bibitem[{Bifet et~al.(2010)Bifet, Holmes, Pfahringer, Kranen, Kremer, Jansen, and Seidl}]{moa}
Bifet, A.; Holmes, G.; Pfahringer, B.; Kranen, P.; Kremer, H.; Jansen, T.; and Seidl, T. 2010.
\newblock Moa: Massive online analysis, a framework for stream classification and clustering.
\newblock In \emph{Proceedings of the first workshop on applications of pattern analysis}, 44--50. PMLR.

\bibitem[{Boracchi et~al.(2018)Boracchi, Carrera, Cervellera, and Maccio}]{boracchi2018quanttree}
Boracchi, G.; Carrera, D.; Cervellera, C.; and Maccio, D. 2018.
\newblock QuantTree: Histograms for change detection in multivariate data streams.
\newblock In \emph{International Conference on Machine Learning}, 639--648. PMLR.

\bibitem[{Cobb and Van~Looveren(2022)}]{cobb2022context}
Cobb, O.; and Van~Looveren, A. 2022.
\newblock Context-aware drift detection.
\newblock In \emph{International conference on machine learning}, 4087--4111. PMLR.

\bibitem[{Coelho, Torres, and de~Castro(2023)}]{coelho2023concept}
Coelho, R.~A.; Torres, L. C.~B.; and de~Castro, C.~L. 2023.
\newblock Concept Drift Detection with Quadtree-based Spatial Mapping of Streaming Data.
\newblock \emph{Information Sciences}.

\bibitem[{Dau et~al.(2019)Dau, Bagnall, Kamgar, Yeh, Zhu, Gharghabi, Ratanamahatana, and Keogh}]{powersupply}
Dau, H.~A.; Bagnall, A.; Kamgar, K.; Yeh, C.-C.~M.; Zhu, Y.; Gharghabi, S.; Ratanamahatana, C.~A.; and Keogh, E. 2019.
\newblock The UCR time series archive.
\newblock \emph{IEEE/CAA Journal of Automatica Sinica}, 6(6): 1293--1305.

\bibitem[{EP(1978)}]{boxhunter}
EP, B.~G. 1978.
\newblock Statistics for Experimenters: An Introduction to Design.
\newblock \emph{Data Analysis, and Model Building}, 43--48.

\bibitem[{Frias-Blanco et~al.(2014)Frias-Blanco, del Campo-{\'A}vila, Ramos-Jimenez, Morales-Bueno, Ortiz-Diaz, and Caballero-Mota}]{HDDM}
Frias-Blanco, I.; del Campo-{\'A}vila, J.; Ramos-Jimenez, G.; Morales-Bueno, R.; Ortiz-Diaz, A.; and Caballero-Mota, Y. 2014.
\newblock Online and non-parametric drift detection methods based on Hoeffding’s bounds.
\newblock \emph{IEEE Transactions on Knowledge and Data Engineering}, 27(3): 810--823.

\bibitem[{Gama et~al.(2004)Gama, Medas, Castillo, and Rodrigues}]{ddm}
Gama, J.; Medas, P.; Castillo, G.; and Rodrigues, P. 2004.
\newblock Learning with drift detection.
\newblock In \emph{Advances in Artificial Intelligence--SBIA 2004: 17th Brazilian Symposium on Artificial Intelligence, Sao Luis, Maranhao, Brazil, September 29-Ocotber 1, 2004. Proceedings 17}, 286--295. Springer.

\bibitem[{Harries(1999)}]{elec}
Harries, M. 1999.
\newblock Splice-2 comparative evaluation: electricity pricing (technical report unsw-cse-tr-9905).
\newblock \emph{Artificial Intelligence Group, School of Computer Science and Engineering, The University of New South Wales, Sydney}, 2052.

\bibitem[{He et~al.(2016)He, Zhang, Ren, and Sun}]{resnet}
He, K.; Zhang, X.; Ren, S.; and Sun, J. 2016.
\newblock Deep residual learning for image recognition.
\newblock In \emph{Proceedings of the IEEE conference on computer vision and pattern recognition}, 770--778.

\bibitem[{Huggard et~al.(2020)Huggard, Koh, Dobbie, and Zhang}]{medical}
Huggard, H.; Koh, Y.~S.; Dobbie, G.; and Zhang, E. 2020.
\newblock Detecting concept drift in medical triage.
\newblock In \emph{Proceedings of the 43rd International ACM SIGIR Conference on Research and Development in Information Retrieval}, 1733--1736.

\bibitem[{Hulten, Spencer, and Domingos(2001)}]{vfdt}
Hulten, G.; Spencer, L.; and Domingos, P. 2001.
\newblock Mining time-changing data streams.
\newblock In \emph{Proceedings of the seventh ACM SIGKDD international conference on Knowledge discovery and data mining}, 97--106.

\bibitem[{Ikonomovska(2011)}]{airline}
Ikonomovska, E. 2011.
\newblock Airline dataset.
\newblock \emph{URL http://kt. ijs. si/elena\_ikonomovska/data. html.(Accessed on 02/06/2020)}.

\bibitem[{Ismail~Fawaz et~al.(2019)Ismail~Fawaz, Forestier, Weber, Idoumghar, and Muller}]{cd}
Ismail~Fawaz, H.; Forestier, G.; Weber, J.; Idoumghar, L.; and Muller, P.-A. 2019.
\newblock Deep learning for time series classification: a review.
\newblock \emph{Data Mining and Knowledge Discovery}, 33(4): 917--963.

\bibitem[{Jiang, Wang, and Dou(2022)}]{jiang2022harmofl}
Jiang, M.; Wang, Z.; and Dou, Q. 2022.
\newblock Harmofl: Harmonizing local and global drifts in federated learning on heterogeneous medical images.
\newblock In \emph{Proceedings of the AAAI Conference on Artificial Intelligence}, volume~36, 1087--1095.

\bibitem[{Jiao et~al.(2022)Jiao, Guo, Yang, Pu, Zheng, and Gong}]{iwe}
Jiao, B.; Guo, Y.; Yang, C.; Pu, J.; Zheng, Z.; and Gong, D. 2022.
\newblock Incremental Weighted Ensemble for Data Streams with Concept Drift.
\newblock \emph{IEEE Transactions on Artificial Intelligence}.

\bibitem[{LeCun, Bengio, and Hinton(2015)}]{dnn}
LeCun, Y.; Bengio, Y.; and Hinton, G. 2015.
\newblock Deep learning.
\newblock \emph{nature}, 521(7553): 436--444.

\bibitem[{Li et~al.(2022)Li, Yang, Liu, Xia, and Bian}]{ddgda}
Li, W.; Yang, X.; Liu, W.; Xia, Y.; and Bian, J. 2022.
\newblock Ddg-da: Data distribution generation for predictable concept drift adaptation.
\newblock In \emph{Proceedings of the AAAI Conference on Artificial Intelligence}, volume~36, 4092--4100.

\bibitem[{Liu, Lu, and Zhang(2020)}]{eikmeans}
Liu, A.; Lu, J.; and Zhang, G. 2020.
\newblock Concept drift detection via equal intensity k-means space partitioning.
\newblock \emph{IEEE transactions on cybernetics}, 51(6): 3198--3211.

\bibitem[{Liu et~al.(2017)Liu, Song, Zhang, and Lu}]{liu2017regional}
Liu, A.; Song, Y.; Zhang, G.; and Lu, J. 2017.
\newblock Regional concept drift detection and density synchronized drift adaptation.
\newblock In \emph{IJCAI International Joint Conference on Artificial Intelligence}.

\bibitem[{Lu et~al.(2018{\natexlab{a}})Lu, Liu, Dong, Gu, Gama, and Zhang}]{survey}
Lu, J.; Liu, A.; Dong, F.; Gu, F.; Gama, J.; and Zhang, G. 2018{\natexlab{a}}.
\newblock Learning under concept drift: A review.
\newblock \emph{IEEE transactions on knowledge and data engineering}, 31(12): 2346--2363.

\bibitem[{Lu et~al.(2020)Lu, Liu, Song, and Zhang}]{lu2020data}
Lu, J.; Liu, A.; Song, Y.; and Zhang, G. 2020.
\newblock Data-driven decision support under concept drift in streamed big data.
\newblock \emph{Complex \& intelligent systems}, 6(1): 157--163.

\bibitem[{Lu et~al.(2018{\natexlab{b}})Lu, Xuan, Zhang, and Luo}]{lu2018structural}
Lu, J.; Xuan, J.; Zhang, G.; and Luo, X. 2018{\natexlab{b}}.
\newblock Structural property-aware multilayer network embedding for latent factor analysis.
\newblock \emph{Pattern Recognition}, 76: 228--241.

\bibitem[{Miyaguchi and Kajino(2019)}]{Cogra}
Miyaguchi, K.; and Kajino, H. 2019.
\newblock Cogra: Concept-drift-aware stochastic gradient descent for time-series forecasting.
\newblock In \emph{Proceedings of the AAAI Conference on Artificial Intelligence}, volume~33, 4594--4601.

\bibitem[{Montiel et~al.(2021)Montiel, Halford, Mastelini, Bolmier, Sourty, Vaysse, Zouitine, Gomes, Read, Abdessalem et~al.}]{river}
Montiel, J.; Halford, M.; Mastelini, S.~M.; Bolmier, G.; Sourty, R.; Vaysse, R.; Zouitine, A.; Gomes, H.~M.; Read, J.; Abdessalem, T.; et~al. 2021.
\newblock River: machine learning for streaming data in python.
\newblock \emph{Journal of Machine Learning Research}, 22(110): 1--8.

\bibitem[{Mourtada, Ga{\"\i}ffas, and Scornet(2021)}]{amf}
Mourtada, J.; Ga{\"\i}ffas, S.; and Scornet, E. 2021.
\newblock AMF: Aggregated Mondrian forests for online learning.
\newblock \emph{Journal of the Royal Statistical Society Series B: Statistical Methodology}, 83(3): 505--533.

\bibitem[{Park et~al.(2021)Park, Shalit, Sch{\"o}lkopf, and Muandet}]{CoDiTE}
Park, J.; Shalit, U.; Sch{\"o}lkopf, B.; and Muandet, K. 2021.
\newblock Conditional distributional treatment effect with kernel conditional mean embeddings and U-statistic regression.
\newblock In \emph{International Conference on Machine Learning}, 8401--8412. PMLR.

\bibitem[{Pedregosa et~al.(2011)Pedregosa, Varoquaux, Gramfort, Michel, Thirion, Grisel, Blondel, Prettenhofer, Weiss, Dubourg, Vanderplas, Passos, Cournapeau, Brucher, Perrot, and Duchesnay}]{scikit-learn}
Pedregosa, F.; Varoquaux, G.; Gramfort, A.; Michel, V.; Thirion, B.; Grisel, O.; Blondel, M.; Prettenhofer, P.; Weiss, R.; Dubourg, V.; Vanderplas, J.; Passos, A.; Cournapeau, D.; Brucher, M.; Perrot, M.; and Duchesnay, E. 2011.
\newblock Scikit-learn: Machine Learning in {P}ython.
\newblock \emph{Journal of Machine Learning Research}, 12: 2825--2830.

\bibitem[{Raab, Heusinger, and Schleif(2020)}]{kswin}
Raab, C.; Heusinger, M.; and Schleif, F.-M. 2020.
\newblock Reactive soft prototype computing for concept drift streams.
\newblock \emph{Neurocomputing}, 416: 340--351.

\bibitem[{Sebasti{\~a}o and Fernandes(2017)}]{ph}
Sebasti{\~a}o, R.; and Fernandes, J.~M. 2017.
\newblock Supporting the page-hinkley test with empirical mode decomposition for change detection.
\newblock In \emph{International Symposium on Methodologies for Intelligent Systems}, 492--498. Springer.

\bibitem[{Silverman(2018)}]{silverman2018density}
Silverman, B.~W. 2018.
\newblock \emph{Density Estimation for Statistics and Data Analysis}.
\newblock Monographs on statistics and applied probability. Boca Raton, FL: CRC Press.
\newblock ISBN 1-351-45617-2.

\bibitem[{Song et~al.(2007)Song, Wu, Jermaine, and Ranka}]{song2007statistical}
Song, X.; Wu, M.; Jermaine, C.; and Ranka, S. 2007.
\newblock Statistical change detection for multi-dimensional data.
\newblock In \emph{Proceedings of the 13th ACM SIGKDD international conference on Knowledge discovery and data mining}, 667--676.

\bibitem[{Souza et~al.(2021)Souza, Parmezan, Chowdhury, and Mueen}]{souza2021efficient}
Souza, V.~M.; Parmezan, A.~R.; Chowdhury, F.~A.; and Mueen, A. 2021.
\newblock Efficient unsupervised drift detector for fast and high-dimensional data streams.
\newblock \emph{Knowledge and Information Systems}, 63: 1497--1527.

\bibitem[{Stucchi et~al.(2023)Stucchi, Rizzo, Folloni, and Boracchi}]{stucchi2023kernel}
Stucchi, D.; Rizzo, P.; Folloni, N.; and Boracchi, G. 2023.
\newblock Kernel quanttree.
\newblock In \emph{International Conference on Machine Learning}, 32677--32697. PMLR.

\bibitem[{Tahmasbi et~al.(2021)Tahmasbi, Jothimurugesan, Tirthapura, and Gibbons}]{driftsurf}
Tahmasbi, A.; Jothimurugesan, E.; Tirthapura, S.; and Gibbons, P.~B. 2021.
\newblock Driftsurf: Stable-state/reactive-state learning under concept drift.
\newblock In \emph{International Conference on Machine Learning}, 10054--10064. PMLR.

\bibitem[{Valiant(1984)}]{pac}
Valiant, L.~G. 1984.
\newblock A theory of the learnable.
\newblock In \emph{Annual ACM Symposium on Theory of Computing: Proceedings of the sixteenth annual ACM symposium on Theory of computing}, 436--445. New York, NY, USA: ACM.
\newblock ISBN 0897911334.

\bibitem[{Virtanen et~al.(2020)Virtanen, Gommers, Oliphant, Haberland, Reddy, Cournapeau, Burovski, Peterson, Weckesser, Bright, {van der Walt}, Brett, Wilson, Millman, Mayorov, Nelson, Jones, Kern, Larson, Carey, Polat, Feng, Moore, {VanderPlas}, Laxalde, Perktold, Cimrman, Henriksen, Quintero, Harris, Archibald, Ribeiro, Pedregosa, {van Mulbregt}, and {SciPy 1.0 Contributors}}]{scipy}
Virtanen, P.; Gommers, R.; Oliphant, T.~E.; Haberland, M.; Reddy, T.; Cournapeau, D.; Burovski, E.; Peterson, P.; Weckesser, W.; Bright, J.; {van der Walt}, S.~J.; Brett, M.; Wilson, J.; Millman, K.~J.; Mayorov, N.; Nelson, A. R.~J.; Jones, E.; Kern, R.; Larson, E.; Carey, C.~J.; Polat, {\.I}.; Feng, Y.; Moore, E.~W.; {VanderPlas}, J.; Laxalde, D.; Perktold, J.; Cimrman, R.; Henriksen, I.; Quintero, E.~A.; Harris, C.~R.; Archibald, A.~M.; Ribeiro, A.~H.; Pedregosa, F.; {van Mulbregt}, P.; and {SciPy 1.0 Contributors}. 2020.
\newblock {{SciPy} 1.0: Fundamental Algorithms for Scientific Computing in Python}.
\newblock \emph{Nature Methods}, 17: 261--272.

\bibitem[{Wan, Liang, and Yoon(2024)}]{mcdd}
Wan, K.; Liang, Y.; and Yoon, S. 2024.
\newblock Online Drift Detection with Maximum Concept Discrepancy.
\newblock In \emph{Proceedings of the 30th ACM SIGKDD Conference on Knowledge Discovery and Data Mining}.

\bibitem[{Wang et~al.(2022)Wang, Lu, Liu, Song, Xiong, and Zhang}]{adlter}
Wang, K.; Lu, J.; Liu, A.; Song, Y.; Xiong, L.; and Zhang, G. 2022.
\newblock Elastic gradient boosting decision tree with adaptive iterations for concept drift adaptation.
\newblock \emph{Neurocomputing}, 491: 288--304.

\bibitem[{Wang et~al.(2021)Wang, Lu, Liu, Zhang, and Xiong}]{ns}
Wang, K.; Lu, J.; Liu, A.; Zhang, G.; and Xiong, L. 2021.
\newblock Evolving gradient boost: A pruning scheme based on loss improvement ratio for learning under concept drift.
\newblock \emph{IEEE Transactions on Cybernetics}.

\bibitem[{Yonekawa, Saito, and Kurokawa(2022)}]{yonekawa2022riden}
Yonekawa, K.; Saito, K.; and Kurokawa, M. 2022.
\newblock RIDEN: Neural-based Uniform Density Histogram for Distribution Shift Detection.
\newblock In \emph{Proceedings of the Second International Conference on AI-ML Systems}, 1--9.

\bibitem[{Yu et~al.(2024)Yu, Lu, Zhang, and Zhang}]{yu2023online}
Yu, E.; Lu, J.; Zhang, B.; and Zhang, G. 2024.
\newblock Online Boosting Adaptive Learning under Concept Drift for Multistream Classification.
\newblock In \emph{Proceedings of the AAAI Conference on Artificial Intelligence}, volume~38, 16522--16530.

\bibitem[{Yuan et~al.(2022)Yuan, Li, Xia, Gao, Liu, Yuan, and You}]{yuan2022recent}
Yuan, L.; Li, H.; Xia, B.; Gao, C.; Liu, M.; Yuan, W.; and You, X. 2022.
\newblock Recent Advances in Concept Drift Adaptation Methods for Deep Learning.
\newblock In \emph{IJCAI}, 5654--5661.

\bibitem[{Zhou et~al.(2023)Zhou, Lu, Song, and Zhang}]{10115471}
Zhou, M.; Lu, J.; Song, Y.; and Zhang, G. 2023.
\newblock Multi-Stream Concept Drift Self-Adaptation Using Graph Neural Network.
\newblock \emph{IEEE Transactions on Knowledge and Data Engineering}, 35(12): 12828--12841.

\end{thebibliography}

\newpage

\appendix

\appendix

\twocolumn[{

\section{Proof of Identical Error Rate when Concept Drift Occurs}\label{proof0}

\begin{definition}
Let $\mathcal{X}$ be an input space and $\mathcal{Y}$ be an output space. A \textit{concept} is a function $c:\mathcal{X}\to\mathcal{Y}$ that assigns to each input $x\in \mathcal{X}$ a corresponding label $c(x)\in\mathcal{Y}$. A classifier $h:\mathcal{X}\to\mathcal{Y}$ is a learned hypothesis that attempts to approximate a given concept. 

We assume a data-generating distribution $D$ defined over $\mathcal{X}$, with $x\sim D$. At any time, the \textit{true concept} $c$ determines the label for each $x$. If concept drift occurs, the concept changes from an old concept $c_{old}$ to a new concept $c_{new}$, potentially altering the mapping from $\mathcal{X}$ to $\mathcal{Y}$. We assume the input distribution $D$ remains fixed, i.e., $p(x)$ is unchanged, but $c_{old}(x)$ and $c_{new}(x)$ may differ on some subset of $\mathcal{X}$.
\end{definition}

\begin{definition}
A loss function $\mathcal{L}:\mathcal{Y}\times \mathcal{Y}\to\mathbb{R}_{\ge 0}$ measures the discrepancy between the predicted label $h(x)$ and the true label $c(x)$. For example:
\begin{itemize}
    \item \textbf{0-1 loss:} 
    \[
    \mathcal{L}(h(x), c(x)) = 
    \begin{cases}
    0, & \text{if } h(x)=c(x) \\[6pt]
    1, & \text{otherwise}
    \end{cases}.
    \]

    \item \textbf{MSE loss:} 
    \[
    \mathcal{L}(h(x), c(x)) = (h(x)-c(x))^2.
    \]
\end{itemize}

The \textit{error rate} or \textit{generalization error} of $h$ under concept $c$ and distribution $D$ is given by:
\[
\mathbb{E}_{x\sim D}[\mathcal{L}(h(x), c(x))].
\]
\end{definition}

\begin{theorem}
Concept drift can occur even if the error rate under a given loss function remains unchanged. More formally, there exist scenarios such that $c_{old}(x)\neq c_{new}(x)$ for some $x\in \mathcal{X}$, yet:
\[
\mathbb{E}_{x\sim D}[\mathcal{L}(h(x), c_{new}(x))] = \mathbb{E}_{x\sim D}[\mathcal{L}(h(x), c_{old}(x))]
\]
holds for both 0-1 and MSE losses.
\end{theorem}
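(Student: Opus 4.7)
The plan is to prove the theorem by explicit construction rather than by abstract argument, since the claim is an existence statement: we need only exhibit scenarios in which the concept genuinely changes on a nonempty subset of $\mathcal{X}$, yet the aggregate loss against the fixed hypothesis $h$ is invariant. The guiding intuition is that an imperfect classifier $h$ already makes mistakes under $c_{old}$, so a drift that merely \emph{reshuffles} which points are misclassified, without changing the total misclassified mass (or squared deviation), will leave the error rate intact.

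For the 0-1 loss case, I would partition the relevant region of $\mathcal{X}$ into two disjoint measurable sets:
\begin{equation}
A=\{x: h(x)\neq c_{old}(x)\;\text{and}\;h(x)=c_{new}(x)\},\quad B=\{x: h(x)=c_{old}(x)\;\text{and}\;h(x)\neq c_{new}(x)\},
\end{equation}
and require $c_{old}\equiv c_{new}$ outside $A\cup B$. Writing the expected 0-1 losses in terms of indicator functions and splitting over $A$, $B$, and the complement, the loss difference collapses to $P_D(B)-P_D(A)$. Any construction with $P_D(A)=P_D(B)>0$ yields concept drift with unchanged error rate. I would then give a minimal concrete instance, e.g.\ $\mathcal{X}=\{x_1,x_2\}$ with uniform mass, $\mathcal{Y}=\{0,1\}$, $h\equiv 0$, $c_{old}=(0,1)$, $c_{new}=(1,0)$, to make the existence tangible.

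For the MSE case the bookkeeping is slightly more delicate because we must match \emph{weighted} contributions rather than mere counts. My approach would be to impose the symmetry condition that on each pair of swapped points the squared residuals match: if $c_{old}(x)=a$ becomes $c_{new}(x)=b$ on some $x\in A$, and $c_{old}(x')=a'$ becomes $c_{new}(x')=b'$ on the paired $x'\in B$, I would arrange $|h(x)-a|^2-|h(x)-b|^2=|h(x')-b'|^2-|h(x')-a'|^2$ with the masses balanced. The easiest realization is a symmetric swap around $h$: pick $h(x)=h(x')=m$, $c_{old}(x)=m-\delta$, $c_{new}(x)=m+\delta$, and the mirror on $x'$; then every squared residual is $\delta^2$ both before and after, so the MSE is manifestly invariant while the concept has changed pointwise.

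The main obstacle I expect is not the algebra but the \emph{interpretive} requirement that the construction genuinely qualifies as concept drift under the paper's definition, rather than a degenerate relabeling. I would therefore be careful to keep $D$ fixed, ensure $A\cup B$ has strictly positive measure under $D$, and verify $c_{old}\neq c_{new}$ as functions (not merely up to null sets). A secondary subtlety is that the 0-1 construction needs $|\mathcal{Y}|\ge 2$ while the MSE construction implicitly assumes $\mathcal{Y}\subseteq\mathbb{R}$; stating both cases in their natural setting avoids forcing a single example to cover both losses. Once these framing issues are handled, the two displayed computations finish the proof in a few lines each.
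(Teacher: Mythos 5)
Your proposal is correct, and it takes a more constructive route than the paper. The paper's proof decomposes the expected loss over $X_{same}=\{x\mid c_{old}(x)=c_{new}(x)\}$ and $X_{diff}=\{x\mid c_{old}(x)\neq c_{new}(x)\}$ (for 0-1 loss) or expands the square with $\Delta(x)=c_{new}(x)-c_{old}(x)$ (for MSE), isolates the extra term generated by the drift, and observes that \emph{if} that term has zero expectation while $X_{diff}\neq\emptyset$, the error rate is unchanged; it stops at this conditional statement without exhibiting a witness. Your approach instead produces explicit witnesses: the swap sets $A$ and $B$ with $P_D(A)=P_D(B)>0$ (and the two-point example $h\equiv 0$, $c_{old}=(0,1)$, $c_{new}=(1,0)$) for 0-1 loss, and the symmetric reflection $c_{old}(x)=m-\delta$, $c_{new}(x)=m+\delta$ about $h(x)=m$ for MSE, which makes the cross term $\Delta(x)^2+2\Delta(x)(c_{old}(x)-h(x))$ vanish pointwise. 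Since the theorem is an existence claim, your constructions actually discharge it fully, whereas the paper's argument leaves the reader to supply the scenario in which the residual expectation vanishes; conversely, the paper's decomposition is slightly more general in that it characterizes \emph{all} drifts that leave the error rate fixed (namely those whose difference term integrates to zero, including cancellations across points rather than exact pointwise balancing), of which your balanced-swap and mirror constructions are clean special cases. Your attention to keeping $D$ fixed, requiring $A\cup B$ to have positive measure, and separating the $|\mathcal{Y}|\ge 2$ setting for 0-1 loss from the real-valued setting for MSE addresses framing points the paper glosses over.
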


\begin{proof}
Consider a classifier $h:\mathcal{X}\to\mathcal{Y}$ and an unchanged distribution $D$. Let $c_{old}$ and $c_{new}$ be the old and new concepts, respectively, and define:

$$X_{same}=\{x\mid c_{old}(x)=c_{new}(x)\}, $$

$$X_{diff}=\{x\mid c_{old}(x)\neq c_{new}(x)\}.$$

\noindent\textbf{Case 1: 0-1 Loss.} We have:

\begin{align*}
\mathbb{E}_{x\sim D}[\mathcal{L}(h(x), c_{new}(x))] 
&= \mathbb{E}_{x\sim D}[\mathcal{L}(h(x), c_{new}(x)) \cdot \mathbf{1}_{X_{same}}]
+ \mathbb{E}_{x\sim D}[\mathcal{L}(h(x), c_{new}(x)) \cdot \mathbf{1}_{X_{diff}}]\\[6pt]
&= \mathbb{E}_{x\sim D}[\mathcal{L}(h(x), c_{old}(x)) \cdot \mathbf{1}_{X_{same}}]
+ \mathbb{E}_{x\sim D}[\mathcal{L}(h(x), c_{new}(x)) \cdot \mathbf{1}_{X_{diff}}]\\[6pt]
&= \mathbb{E}_{x\sim D}[\mathcal{L}(h(x), c_{old}(x))]
+ \mathbb{E}_{x\sim D}[(\mathcal{L}(h(x), c_{new}(x)) - \mathcal{L}(h(x), c_{old}(x))) \cdot \mathbf{1}_{X_{diff}}].
\end{align*}

If $\mathbb{E}_{x\sim D}[(\mathcal{L}(h(x), c_{new}(x))-\mathcal{L}(h(x), c_{old}(x)))\cdot\mathbf{1}_{X_{diff}}]=0$, then:
\[
\mathbb{E}_{x\sim D}[\mathcal{L}(h(x), c_{new}(x))] 
= \mathbb{E}_{x\sim D}[\mathcal{L}(h(x), c_{old}(x))],
\]
even though $X_{diff}\neq\emptyset$. Thus, concept drift can occur with no change in error rate under 0-1 loss.

\noindent\textbf{Case 2: MSE Loss.} Let $\Delta(x)=c_{new}(x)-c_{old}(x)$. Then:
\begin{align*}
\mathbb{E}_{x\sim D}[(h(x)-c_{new}(x))^2]
&= \mathbb{E}_{x\sim D}[(h(x)-c_{old}(x)-\Delta(x))^2]\\[6pt]
&= \mathbb{E}_{x\sim D}[(h(x)-c_{old}(x))^2]
+ \mathbb{E}_{x\sim D}[\Delta(x)^2 + 2\Delta(x)(c_{old}(x)-h(x))].
\end{align*}

If $\mathbb{E}_{x\sim D}[\Delta(x)^2 + 2\Delta(x)(c_{old}(x)-h(x))]=0$, then:
\[
\mathbb{E}_{x\sim D}[(h(x)-c_{new}(x))^2]
= \mathbb{E}_{x\sim D}[(h(x)-c_{old}(x))^2],
\]
despite the existence of $\Delta(x)\neq 0$ for some $x$. Hence, under MSE loss, concept drift can also remain undetected by the error rate.

In both cases, we find scenarios where concept drift does not manifest as a change in expected error, completing the proof.

\end{proof}

}]

\section{Table of notation}
\begin{table}[h]
\centering
\begin{tabular}{|c|p{0.6\linewidth}|}
\hline
\textbf{Symbol} & \textbf{Description} \\
\hline
$D_{1,t}$ & Streaming data collected during the period $[1, t]$ \\
$(x_j, y_j)$ & Instance $j$ with features $x_j$ and label $y_j$ \\
$\bar{D}j$ & Chunk $j$ of data \\
$M$ & Chunk size \\
$P_{1,t}(x, y)$ & Distribution of data in the period $[1, t]$ \\
$f$ & Classifier \\
$\hat{y}$ & Predicted class probabilities \\
$n$ & Number of classes \\
$e_i$ & Prediction error of instance $x_i$ \\
$\mathbb{I}(\cdot)$ & Indicator function \\
$u_i$ & Uncertainty of instance $x_i$ \\
$f_{y_i}(x_i)$ & Predicted probability of instance $x_i$ belonging to its true class $y_i$ \\
$O_{ij}$ & Observed frequency in the $i$-th row and $j$-th column of the contingency table \\
$E_{ij}$ & Expected frequency for the cell at the $i$-th row and $j$-th column  \\
$H_i$ & Histogram built for window $i$ \\
$N$ & Total observed frequencies \\
$\chi^2$ & Chi-square test statistic \\
$p$ & p-value associated with $\chi^2$ \\
$w$ & Degrees of freedom \\
$K$ & Hyperparameter in Ei-kMeans \\
$\sigma$ & Predefined threshold for the Chi-square test \\ 
\hline
\end{tabular}
\caption{Notation table}
\label{tab:notation}
\end{table}

\section{Proof of Theorem 1}\label{proof1}
\begin{proof}
Let $W$ be a window containing a subset of instances from the data stream $D_{1,t}$.
Let $\mathcal{Y} = \{1, 2, ..., n\}$ be the set of class labels, where $n$ is the number of classes.
For an instance $x_i$ in $W$, let $\hat{y}_i = \arg\max_{j \in \mathcal{Y}} f_j(x_i)$ be the predicted class.
Define the partition $\mathcal{B} = \{B_0, B_1, ..., B_m\}$ of $W$ as follows:
$$B_0 = \{(x_i, y_i) \in W | \hat{y}_i = y_i\}$$
$$\bigcup_{j=1}^m B_j = \{(x_i, y_i) \in W | \hat{y}_i \neq y_i\}$$
where $B_j$ are pairwise disjoint, i.e., $B_i \cap B_j = \emptyset$ for $i \neq j$. Moreover, $\bigcup_{j=1}^m B_j$ covers the entire range of PU-index values [0, 1] for misclassified instances

Let $H_W$ be the histogram of $W$ constructed based on this partition, where each bin corresponds to a set in $\mathcal{B}$.
Given $H_{W_1} = H_{W_2}$, it implies
$$\frac{|B_j^{(1)}|}{|W_1|} = \frac{|B_j^{(2)}|}{|W_2|}, \forall j \in \{0, 1, ..., m\},$$ where $|\cdot|$ denotes set cardinality.
The error rate for window $W$ is defined as:
$$\bar{e}_W = \frac{1}{|W|} \sum_{(x_i, y_i) \in W} e_i = \frac{1}{|W|} \sum_{j=1}^m |B_j| = 1 - \frac{|B_0|}{|W|}$$
Therefore,
$\bar{e}_{W_1} = 1 - \frac{|B_0^{(1)}|}{|W_1|} = 1 - \frac{|B_0^{(2)}|}{|W_2|} = \bar{e}_{W_2}$, which implies that the error rate of $W_1$ and $W_2$ are equal.

The error standard deviation for window $W$ is defined as:
\begin{equation}
\begin{aligned}
\sigma_W &= \sqrt{\frac{1}{|W|} \sum_{(x_i, y_i) \in W} (e_i - \bar{e}_W)^2} \\
         &= \sqrt{\frac{1}{|W|} \sum_{(x_i, y_i) \in W} (e_i^2 - 2e_i\cdot\bar{e}_W + \bar{e}_W^2)} \\
         &= \sqrt{\frac{1}{|W|} \sum_{(x_i, y_i) \in W} e_i^2 - 2\bar{e}_W \cdot \frac{1}{|W|} \sum_{(x_i, y_i) \in W} e_i + \bar{e}_W^2} \\
         &= \sqrt{\frac{1}{|W|} \sum_{(x_i, y_i) \in W} e_i - 2\bar{e}_W^2 + \bar{e}_W^2} \\
         &= \sqrt{\bar{e}_W - \bar{e}_W^2}
\end{aligned}
\end{equation}
Since $\bar{e}_{W_1} = \bar{e}_{W_2}$, we have $\sigma_{W_1} = \sigma_{W_2}$.

Therefore, the error rates and error standard deviations of $W_1$ and $W_2$ are equal.

\end{proof}

\section{Proof of Theorem 2}\label{proof2}
\begin{proof}
We prove the theorem by providing a counterexample in binary classification case. Consider two windows $W_1$ and $W_2$:

\begin{itemize}
\item In $W_1$, 50\% of instances have uncertainty in $(0.9, 1.0]$, and the remaining 50\% have uncertainty in $[0.0, 0.1]$.
\item In $W_2$, 50\% of instances have uncertainty in $(0.8, 0.9]$, and the remaining 50\% have uncertainty in $(0.1, 0.2]$.
\end{itemize}

Let $B_1 = \{b: b \subseteq [0, 0.5]\}$ and $B_2 = \{b: b \subseteq (0.5, 1]\}$ denote the sets of histogram bins below and above the uncertainty value of 0.5, respectively.

In both windows, 50\% of instances fall into bins in $B_2$ (uncertainty $>$ 0.5) and are misclassified, while the remaining 50\% fall into bins in $B_1$ (uncertainty $\leq$ 0.5) and are correctly classified. Thus, the error rates of both windows are equal:

\begin{align}
\bar{e}_1 &= \frac{1}{n^{(1)}} \sum_{b \in B_2} n_b^{(1)} = 0.5, \\
\bar{e}_2 &= \frac{1}{n^{(2)}} \sum_{b \in B_2} n_b^{(2)} = 0.5.
\end{align}

where $n^{(1)}$ and $n^{(2)}$ are the total numbers of instances in $W_1$ and $W_2$, respectively, and $n_b^{(1)}$ and $n_b^{(2)}$ denote the number of instances falling into bin $b$ in $W_1$ and $W_2$, respectively. In this case, the error rates of the two windows are equal.

Consequently, the error standard deviations of both windows are also equal:

\begin{align}
\text{std}_1 &= \sqrt{\frac{1}{n^{(1)}} \sum_{i=1}^{n^{(1)}} (e_i^{(1)} - \bar{e}_1)^2}\notag\\
& = \sqrt{\bar{e}_1 (1 - \bar{e}_1)} = 0.5, \\
\text{std}_2 &= \sqrt{\frac{1}{n^{(2)}} \sum_{i=1}^{n^{(2)}} (e_i^{(2)} - \bar{e}_2)^2} \notag\\
& = \sqrt{\bar{e}_2 (1 - \bar{e}_2)} = 0.5.
\end{align}

However, the bin proportions of the uncertainty histograms $H_1$ and $H_2$ are different:

\begin{itemize}
\item In $H_1$, 50\% of instances fall into the bin $(0.9, 1.0]$, and 50\% fall into the bin $[0.0, 0.1]$.
\item In $H_2$, 50\% of instances fall into the bin $(0.8, 0.9]$, and 50\% fall into the bin $(0.1, 0.2]$.
\end{itemize}

Therefore, even though the error standard deviations or error rates of $W_1$ and $W_2$ are equal, their uncertainty histograms have different bin proportions, which contradicts the statement that equal error standard deviations imply identical bin proportions in the uncertainty histograms.
\end{proof}

\section{Pseudo code and time complexity of PUDD}
\begin{algorithm}[h]
    \caption{Adaptive PU-index Bucketing algorithm drift detector based on PU-index.}
    \begin{algorithmic}[1]
    \STATE \textbf{Input}: Chi-square test threshold $\sigma$, PU-index of recent examples $\boldsymbol{u}_{t_1,t}$ derived from sliding window strategy, hyperparameter $K$.
    \FOR{$r=t_1, ..., t$} \label{a-for}
    \STATE Derive $\boldsymbol{u}^C_{t_1,r},\boldsymbol{u}^C_{r,t},\boldsymbol{u}^M_{t_1,r}, \boldsymbol{u}^M_{r,t}$ from equation (1-4).   \label{a-split}
    \IF{$\mathrm{Mean}(\boldsymbol{u}^M_{t_1,r}) \geq \mathrm{Mean}(\boldsymbol{u}^M_{r,t})$} \label{a-skip}
    \STATE Initialize $K$ centroids $C$ based on Ei-kMeans centroids initialization algorithm on $\boldsymbol{u}^C_{t_1,r}$. \label{a-main}
    \STATE Build clusters based on Ei-kMeans partitioning algorithm on $\boldsymbol{u}^C_{t_1,r}$ based on initial centroids $C$.
    \STATE For each cluster, count the frequencies of $\boldsymbol{u}^C_{t_1,r}$ and $\boldsymbol{u}^C_{t_1,r}$. Subsequently, calculate the sizes of $\boldsymbol{u}^M_{t_1,r}$ and $\boldsymbol{u}^M_{r,t}$. Utilize these values to populate the contingency table accordingly.
    \STATE Conduct a Chi-square test on the contingency table and derive p-value $p$ by Eq (14).
    \STATE $P=P\cup \{p\}$. \label{a-mainend}
    \ENDIF \label{a-endif}
    \ENDFOR
    \IF{$P \neq \emptyset $ and $\min P<\sigma$} \label{a-raise-condition}
    \STATE Raise drift detected alarm.
    \ENDIF
    \end{algorithmic}
    \label{alg:ours}
    \end{algorithm}

    The algorithm framework of the PU-index-based drift detector is provided in Algorithm \ref{alg:ours}. Assuming the current time step is $t$ and the last alarm is raised at time step $t_1$, thus there are $t-t_1$ chunks in the current sub-stream.
    The first step is to split these chunks into two windows.
    In line \ref{a-for}, we search all the cutting points from the index $t_1$ to $t$.
    Then we apply the Adaptive PU-index Bucketing-based drift detector on the derived two samples from line \ref{a-split}-\ref{a-endif}. Notice that this brutal-force search can be processed simultaneously by multiprocess processing in implementation. In line \ref{a-skip}, we compare the mean of PU-index on the two samples $\boldsymbol{u}^M_{t_1,r}$ and $\boldsymbol{u}^M_{r,t}$, and we skip the subsequent p-value computation if the mean of $\boldsymbol{u}^M_{t_1,r}$ is larger than that of $\boldsymbol{u}^M_{r,t}$ to speed up the algorithm.
    From line \ref{a-main}-\ref{a-mainend}, we built clusters and a contingency table to derive the p-value for each cutting point. At line \ref{a-raise-condition}, we investigate whether the minimum p-value is smaller than the predefined threshold. If so, the null hypothesis test is rejected and we claim the concept drift has occurred.
    The time complexity for the lines \ref{a-split} is $O(n)$, where $n$ is the number of examples in the sliding window. The time complexity for the lines \ref{a-main}-\ref{a-mainend} is $O(K^2 n\log n)$ as introduced in \cite{eikmeans}. Thus, the total time complexity of our algorithm is $O(K^2 n\log n)$.

\section{Pseudo code for online instance based PUDD}
Firstly, we introduce how to incrementally update the key variables, clusters, and contingency tables. Then we provide the pseudo code of online instance based PUDD algorithm.

\textbf{Incremental Updates of Sets.}
At any time $t$, for each candidate cut point $r \in [t_1, t]$, we have the following sets: 
$u^C_{t_1,r}(t)$, $u^M_{t_1,r}(t)$, $u^C_{r,t}(t)$, and $u^M_{r,t}(t)$, derived from the PU-index.

Here, $u^C_{t_1,r}(t)$ and $u^M_{t_1,r}(t)$ represent the correctly and misclassified instances, respectively, in the substream from $t_1$ to $r$, while $u^C_{r,t}(t)$ and $u^M_{r,t}(t)$ represent the correctly and misclassified instances, respectively, in the substream from $r+1$ to $t$.

When time advances to $t+1$, we receive a new instance $u_{t+1}$ with prediction $\hat{y}_{t+1}$ and true label $y_{t+1}$. We update the sets for all $r \in [t_1, t+1]$ as follows.\\

\begin{equation}\label{eq:update_uc_t1r}
u^C_{t_1,r}(t+1) =
\begin{cases}
u^C_{t_1,r}(t), & r < t+1  \\[6pt]
u^C_{t_1,r}(t) \cup \{u_{t+1}\}, & r = t+1 \text{ and } \hat{y}_{t+1}=y_{t+1}\\
u^C_{t_1,r}(t), & r = t+1 \text{ and } \hat{y}_{t+1}\neq y_{t+1}
\end{cases}
\end{equation}

\begin{equation}\label{eq:update_um_t1r}
u^M_{t_1,r}(t+1) =
\begin{cases}
u^M_{t_1,r}(t), & r < t+1 \\[6pt]
u^M_{t_1,r}(t), & r = t+1 \text{ and } \hat{y}_{t+1}=y_{t+1}\\[6pt]
u^M_{t_1,r}(t) \cup \{u_{t+1}\}, & r = t+1 \text{ and } \hat{y}_{t+1}\neq y_{t+1}
\end{cases}
\end{equation}

\begin{equation}\label{eq:update_uc_rt}
u^C_{r,t+1}(t+1) =
\begin{cases}
u^C_{r,t}(t) \cup \{u_{t+1}\}, & r < t+1 \text{ and } \hat{y}_{t+1}=y_{t+1}\\[6pt]
u^C_{r,t}(t), & r < t+1 \text{ and } \hat{y}_{t+1}\neq y_{t+1}\\[6pt]
\emptyset, & r = t+1
\end{cases}
\end{equation}

\begin{equation}\label{eq:update_um_rt}
u^M_{r,t+1}(t+1) =
\begin{cases}
u^M_{r,t}(t) \cup \{u_{t+1}\}, & r < t+1 \text{ and } \hat{y}_{t+1}\neq y_{t+1}\\[6pt]
u^M_{r,t}(t), & r < t+1 \text{ and } \hat{y}_{t+1}=y_{t+1}\\[6pt]
\emptyset, & r = t+1
\end{cases}
\end{equation}

In these definitions, if $r < t+1$, the first window $\boldsymbol{u}_{t_1,r}$ remains unchanged, and we only incrementally update the second window $\boldsymbol{u}_{r,t+1}$ by adding the new instance $u_{t+1}$ according to its classification correctness. If $r = t+1$, a new cut point is introduced, and the first window $\boldsymbol{u}_{t_1,r}$ is augmented by $u_{t+1}$ if it is correctly classified, while the second window $\boldsymbol{u}_{r,t+1}$ is initially empty.

\textbf{Incremental Updates of Clusters.}
For each $r \in [t_1, t]$, at time $t$, we have clusters:
\begin{equation}\label{eq:clusters_t}
C_{t_1,r}(t) = \{C_1(t), C_2(t), \dots, C_K(t)\},
\end{equation}
obtained by applying Ei-kMeans initialization and subsequent kMeans clustering on $u^C_{t_1,r}(t)$.\\
When moving to time $t+1$, the incremental update is:
\begin{equation}\label{eq:update_clusters}
C_{t_1,r}(t+1) =
\begin{cases}
C_{t_1,r}(t), & r < t+1\\[6pt]
\text{kMeans}(u^C_{t_1,r}(t+1), C^{\text{init}}), & r = t+1
\end{cases}
\end{equation}

If $r < t+1$, no new instances affect the first window $\boldsymbol{u}_{t_1,r}$, so we can directly use the previously cached clusters. If $r = t+1$, the arrival of $u_{t+1}$ modifies $u^C_{t_1,r}(t+1)$, requiring a fresh Ei-kMeans initialization and kMeans run. Once computed, $C_{t_1,r}(t+1)$ is cached for future reuse.

\textbf{Incremental Updates of Contingency Tables.}
For a given $r \in [t_1, t]$, at time $t$, we have a contingency table:
\begin{equation*}\label{eq:table_t}
T_{t_1,r}(t) \in \mathbb{R}^{2 \times (K+1)},
\end{equation*}
where the first row corresponds to $(u^C_{t_1,r}(t), u^M_{t_1,r}(t))$ and the second row corresponds to $(u^C_{r,t}(t), u^M_{r,t}(t))$. The first $K$ columns represent the counts of correctly classified instances in each of the $K$ clusters, and the last column stores the counts of misclassified instances.

When time advances to $t+1$, we update $T_{t_1,r}(t)$ as:
\begin{equation}\label{eq:update_table}
T_{t_1,r}(t+1) =
\begin{cases}
T_{t_1,r}(t) + \Delta T, & r < t+1 \\[6pt]
\text{Compute Contingency Table}, & r = t+1
\end{cases}
\end{equation}

The term $\Delta T$ accounts for the incremental contribution of the new instance $u_{t+1}$. To compute $\Delta T$, we first determine whether $u_{t+1}$ falls into the first or second window, depending on $r$. Next, we check if $u_{t+1}$ is correctly classified or not. If it is correctly classified, we identify its assigned cluster using $C_{t_1,r}(t+1)$, and increment the corresponding cluster count in the table. If it is misclassified, we increment the misclassified count in the last column. This process involves only a single increment operation per new instance, making it highly efficient.

If $r = t+1$, since the composition of $u^C_{t_1,r}(t+1)$ (and possibly $u^M_{t_1,r}(t+1)$) changes due to the arrival of $u_{t+1}$, we must fully recompute the contingency table according to $u^C_{t_1,r}(t+1)$, $u^M_{t_1,r}(t+1)$,  $u^C_{r,t+1}(t+1)$, $u^M_{r,t+1}(t+1)$. After recomputation, $T_{t_1,r}(t+1)$ is cached.

\begin{algorithm}[t]
\caption{Online Instance based PU-index Bucketing-based Drift Detector}
\label{alg:incremental_pudd}
\begin{algorithmic}[1]
\REQUIRE Current $t+1$, last alarm $t_1$, threshold $\sigma$, $K$, $\boldsymbol{u}_{t_1,t}$, cached results.
\STATE $P \gets \emptyset$

\FOR{$r = t_1 \to t+1$}
    \STATE Update $u^C_{t_1,r}(t+1), u^M_{t_1,r}(t+1), u^C_{r,t+1}(t+1),$ \\$u^M_{r,t+1}(t+1)$ using Eqs.\ (\ref{eq:update_uc_t1r})--(\ref{eq:update_um_rt}) \label{algo2-update}
    \STATE Compute $C_{t_1,r}(t+1)$ using Eq.\ (\ref{eq:update_clusters})
    \STATE Compute $T_{t_1,r}(t+1)$ using Eq.\ (\ref{eq:update_table})
    \STATE Compute Chi-square p-value $p_r$
    \STATE $P = P \cup \{p_r\}$
\ENDFOR

\IF{$P \neq \emptyset$ \textbf{ and } $\min(P)<\sigma$}
    \STATE Raise drift alarm
\ENDIF
\end{algorithmic}
\end{algorithm}

\textbf{Time complexity analysis.} By leveraging caching and incremental updates as defined in Eqs.\ (\ref{eq:update_uc_t1r})--(\ref{eq:update_um_rt}), (\ref{eq:update_clusters}), and (\ref{eq:update_table}), our online instance-based PUDD algorithm achieves efficient updates with minimal computational overhead. When previously computed results are available, there is no need for a full recomputation of clusters or contingency tables, thus enabling near-constant-time updates at most time steps.

The pseudo code of the online instance-based PUDD algorithm is shown in Algorithm~\ref{alg:incremental_pudd}. Consider the computational cost at each relevant line (as annotated in the algorithm):

\begin{itemize}
\item Updating $u^C_{t_1,r}(t+1), u^M_{t_1,r}(t+1), u^C_{r,t+1}(t+1), u^M_{r,t+1}(t+1)$ using Eqs.\ (\ref{eq:update_uc_t1r})--(\ref{eq:update_um_rt}) requires $O(1)$ time (line~\ref{algo2-update}).
\item Recomputing clusters (line 4) requires $O(1)$ time if $r \neq t+1$ due to cached reuse, and $O(K^2 n \log n)$ otherwise, where $n$ is the number of examples from $t_1$ to $t+1$.
\item Incrementally updating the contingency table (line 5) is $O(K)$, as it involves placing a single new instance into an existing bin structure.
\item Computing the Chi-square statistic and deriving the p-value (line 6) is $O(2K)$.

\end{itemize}

In total, the time complexity is:
\[
O(K^2 n\log n + (t-t_1)(1 + 3K)).
\]

Since the for-loop can be parallelized via multi-threading, the complexity is effectively reduced to:
\[
O(K^2 n\log n + 3K),
\]
for the incremental component. This parallelization, combined with caching and incremental updates, enables efficient online PU-index-based drift detection with substantially reduced overhead while maintaining strong detection capabilities.

\section{Implementation details}\label{sec-details}

 Our experiment consists of four parts: comparison with classic drift detectors, comparison with SOTA methods, comparison with baselines on the CIFAR-10-CD dataset, and ablation study.

The experimental setup was established on a server running Ubuntu 18.04, which was outfitted with an NVIDIA A100 GPU and possessed 200GB of RAM.
The performance of drift detectors was evaluated across three classifiers: a Gaussian Naive Bayes classifier (GNB), a Hoeffding Tree Classifier (Very Fast Decision Tree, VFDT) \cite{vfdt}, and a Deep Neural Network (DNN) \cite{dnn}. GNB was implemented using the widely-used scikit-learn package \cite{scikit-learn}, VFDT via the River package \cite{river}, and the DNN, a three-layer neural network, through the PyTorch package.
The DNN consists of a multi-layer perceptron (MLP) followed by an output layer. For most datasets, the DNN comprises two hidden layers, each with 64 neurons and ReLU activation. The input dimension is dataset-dependent.
For the airline dataset, we employ three hidden layers (512, 256, and 64 neurons respectively), all using ReLU activation. The input dimension for this specific case is 679.
The final layer is a fully connected layer mapping from 64 neurons to the output dimension, which varies based on the datasets.

All experiments have been repeated 100 times with different random seeds and average accuracy is reported in this paper. The bins of the histogram, i.e., $k$, built by Adaptive PU-index bucketing are automatically determined by the Ei-kMeans algorithm. We initialize it as 5.
We first introduce the experiment of classic drift detectors. For each time step, classifiers receive a data chunk for prediction and then we feed the error rate into the drift detector for drift detection and adaptation. 
For a fair comparison, GNB and VFDT were adapted by retraining on the current chunk, while DNN adaptation involved resetting the last layer. The evaluation was conducted using two systems: Incremental way (test-and-train) and Train Once Until Alarm way (training the classifier only upon new concept detection or initialization). 
Moreover, the Adam optimizer was employed with an initial learning rate of 0.01 for the training of DDN, and the training involved 100 epochs. 
The setting of the ablation study is the same as above.

For comparison with SOTA methods, considering that all SOTA approaches, except MCDD, are based on ensemble learning, such as AMF and IWE using 10 classifiers, and others allowing up to 10,000 base classifiers, we ensured fairness by also using an ensemble approach for PUDD and MCDD. 
The classifier of our method comprises 5 DNNs as base classifiers. The final prediction is derived through soft voting based on the individual predictions of these classifiers. Additionally, we combine the prediction uncertainty from all 5 base classifiers for effective drift detection.
We also use the Adam optimizer with a learning rate of 0.01 for this experiment but with an increased epoch number of 100. The adaptation approach of this experiment is resetting the classifier when a drift is detected. The increased training epoch number and the adaptation approach of the resetting model are aimed at enhancing the learning capacity and fine-tuning the performance of the Deep Neural Networks within our ensemble framework.

For the proposed CIFAR-10-CD dataset, which is an image dataset with concept drift, we utilized ResNet-18 \cite{resnet} as the classifier model. The optimization was carried out using the Stochastic Gradient Descent (SGD) optimizer with a learning rate set to 0.01. We only evaluate incrementally for CIFAR-10-CD datasets due to their high learning difficulty and the training epoch is set as 5.

\section{Datasets introduction}
\begin{table}[tb]
    \centering
    % \resizebox{\columnwidth}{!}{%
    \resizebox{\linewidth}{!}{%
    \begin{tabular}{@{\extracolsep{-5pt}} l|lllllll}
    % \begin{tabular}{@{\extracolsep{-5pt}} l|lllllll}
    \toprule
    \centering
        Datasets & \textit{Sea-N} & \textit{Sine} & \textit{Mixed} & \textit{Elec2} & \textit{Airline} & \textit{Powersupply} & \textit{CIFAR-10-CD} \\ 
        \midrule
        \# of feature & 3 & 2 & 4 & 8 & 679 & 4 & 32x32 \\ 
        \# of class & 2 & 2 & 2 & 2 & 2 & 2 & 10 \\ 
        \# of dataset & 100k & 100k & 100k & 45k & 58k & 29k & 50k \\ 
        chunk size & 1000 & 1000 & 1000 & 1000 & 1000 & 1000 & 100 \\
        \bottomrule
        
    \end{tabular}
    }
    \caption{Statistics of datasets.}\label{tab:dataset}
\end{table}

\begin{enumerate}
    \item Elec2\cite{elec}: A real-world dataset from the Australian electricity market with 45,312 instances (only the first 45,000 were used and split into 45 chunks). It includes features like electricity demand and a binary class label for price direction. 
    \item Airline\cite{airline}: This real-world dataset includes 581,0462 (we only use the first 58,000 instances) flight schedule records with binary labels for flight delays, containing concept drift due to changes in days, and times. The dataset is split into 58 chunks. The dataset is one-hot encoded and the dimension increases to 679 following \cite{driftsurf}. The dataset can be obtained through the official GitHub repository of the work. 
    \item PowerSupply\cite{powersupply}: A real-world dataset of 29,928 (only the first 29,000 instances are used and split into 29 chunks) hourly power supply records, exhibiting concept drift potentially due to seasonal changes or differences between weekdays and weekends.
    \item SEA\cite{moa}: A synthetic dataset from MOA \cite{moa} with 100,000 data points divided into 100 chunks, demonstrating concept drift by changing thresholds of classification function and including noise variants. We alter the threshold every tenth chunk.
    \item SINE\cite{ddm}: Characterized by two attributes, this dataset involves abrupt concept drift with 100,000 samples (divided into 100 chunks), where labels are determined by a sine function and altered every tenth chunk.
    \item Mixed\cite{ddm}: Designed for studying abrupt concept drift, this dataset comprises 100,000 boolean and numeric examples. We alter its classification function every ten chunks to simulate concept drift.
    \item CIFAR-10-CD: A variant of the CIFAR-10 image dataset, modified to simulate concept drift in user interests. The 50,000 images are divided into 100 chunks. The labels of interests change according to the Markov process.
\end{enumerate}

\section{Baselines introduction}
\begin{enumerate}
    \item DDM\cite{ddm}: This method monitors the online error rate of a learning algorithm and declares a new context when the error is higher than a predefined warning and drift level.
    \item EDDM\cite{EDDM}: EDDM detects concept drift, especially in slow and gradual changes, by measuring the distance between classification errors instead of the error rate.
    \item HDDM-A/W\cite{HDDM}: HDDM-A focuses on abrupt changes using a moving average, while HDDM-W detects gradual changes with weighted moving averages, both maintaining the efficiency of the original HDDM.
    \item KSWIN\cite{kswin}: Kolmogorov-Smirnov Windowing uses a sliding window and the Kolmogorov-Smirnov test to detect distribution changes in data streams.
    \item  PH\cite{ph}: This method enhances the Page Hinkley Test by using a data-dependent second-order intrinsic mode function for more robust change detection without the need for manual parameter tuning.
    \item ADWIN\cite{adwin}: ADWIN dynamically manages variable-sized data windows for concept drift in data streams, automatically adjusting the window size based on detected changes.
    \item IWE\cite{iwe}: This method adapts to new concepts by incrementally adjusting the weights of historical classifiers, using a variable-size window for rapid adaptation to changing data patterns.
    \item AMF\cite{amf}: An online variant of the Random Forest algorithm, AMF uses Mondrian Forests for real-time learning and adaptation. It prunes the decision trees to adjust to data pattern changes.
    \item ADLTER\cite{adlter}: It introduces an Adaptive Iterations method for Gradient Boosting Decision Tree. The method dynamically adjusts the number of iterations in response to drift severity. It also retrains classifiers when necessary.
    \item NS\cite{ns}: This method uses a Loss Improvement Ratio to evaluate GBDT weak learners. It proposed both Naive and Statistical Pruning strategies to adjust the model by pruning less effective learners in response to concept drift.
    % \item AUAI\cite{auai}: An elastic Gradient Boosting Decision Tree algorithm that dynamically adjusts to concept drift by combining incremental learning and pruning strategies to enhance accuracy and efficiency in changing environments.
    \item MCDD \cite{mcdd}: A novel concept drift detection method that uses maximum concept discrepancy and contrastive learning of concept embeddings to adaptively identify various forms of concept drift in high-dimensional data streams.

\end{enumerate}

% \begin{table*}[tb]
% \centering
% % \resizebox{\textwidth}{!}{%
% \setlength{\tabcolsep}{1mm}
% % \resizebox{\textwidth}{!}{
% \begin{tabular}{c|l|cccccccc|cccccccc}
% \toprule
% \multicolumn{10}{c|}{Incremental Training} & \multicolumn{8}{c}{Training only at Initialization or Adaptation} \\
% \toprule
% Classifier & ddm\ name & airline-I & elec2-I & mixed-I & ps-I & sea0-I & sea10-I & sea20-I & sine-I & airline-O & elec2-O & mixed-O & ps-O & sea0-O & sea10-O & sea20-O & sine-O \\

\begin{table}[h]
\setlength{\tabcolsep}{1mm}
\begin{tabular}{cl|cc|cc}
\hline
\multicolumn{2}{c|}{} & \multicolumn{2}{c|}{Incremental Training} & \multicolumn{2}{c}{The other case}\\ \hline
\multicolumn{1}{c|}{Classifier} & ddm\ name & sea10-I & sea20-I & sea10-O & sea20-O \\ \hline
\multicolumn{1}{c|}{\multirow{10}{*}{DNN}}& ADWIN & 87.89& \textbf{\underline{78.15}} & 84.85& 75.04\\
\multicolumn{1}{c|}{} & DDM& \textbf{\underline{88.04}} & \textbf{78.12} & 83.88& 74.30\\
\multicolumn{1}{c|}{} & EDDM& 83.63& 77.32& 77.47& 72.28\\
\multicolumn{1}{c|}{} & HDDM-A& 87.90& 77.81& 85.58& 75.74\\
\multicolumn{1}{c|}{} & HDDM-W& \textbf{87.92} & 77.61& 84.72& \textbf{76.04} \\
\multicolumn{1}{c|}{} & KSWIN & \textbf{87.97} & \textbf{78.09} & 82.31& 74.54\\
\multicolumn{1}{c|}{} & PH & 87.79& 77.92& \textbf{85.92} & 75.85\\ \cline{2-6} 
\multicolumn{1}{c|}{} & PUDD-1& 87.86& 77.50& 85.91& 76.00\\
\multicolumn{1}{c|}{} & PUDD-3& 87.80& 77.33& \textbf{86.02} & \textbf{76.35} \\
\multicolumn{1}{c|}{} & PUDD-5& 87.80& 77.38& \textbf{\underline{86.24}} & \textbf{\underline{76.38}} \\ \hline
\multicolumn{1}{c|}{\multirow{10}{*}{GNB}}& ADWIN & 86.24& 76.90& 85.43& 75.63\\
\multicolumn{1}{c|}{} & DDM& 85.47& 76.26& 84.58& 74.72\\
\multicolumn{1}{c|}{} & EDDM& 86.26& 76.67& \textbf{85.78} & \textbf{75.91} \\
\multicolumn{1}{c|}{} & HDDM-A& 86.63& 77.22& 85.82& 76.25\\
\multicolumn{1}{c|}{} & HDDM-W& 85.70& 77.44& 85.05& 76.70\\
\multicolumn{1}{c|}{} & KSWIN & 84.34& 76.19& 82.84& 74.73\\
\multicolumn{1}{c|}{} & PH & 86.89& 77.57& 86.18& 76.67\\ \cline{2-6} 
\multicolumn{1}{c|}{} & PUDD-1& \textbf{87.24} & \textbf{\underline{78.08}} & 86.53& \textbf{76.89} \\
\multicolumn{1}{c|}{} & PUDD-3& \textbf{\underline{87.28}} & \textbf{77.86} & \textbf{\underline{86.61}} & \textbf{\underline{77.02}} \\
\multicolumn{1}{c|}{} & PUDD-5& \textbf{87.18} & \textbf{77.68} & \textbf{86.55} & 76.67\\ \hline
\multicolumn{1}{c|}{\multirow{10}{*}{VFDT}} & ADWIN & 85.50& 76.05& 85.58& 75.86\\
\multicolumn{1}{c|}{} & DDM& 84.89& 75.87& 84.62& 74.96\\
\multicolumn{1}{c|}{} & EDDM& 86.13& 76.59& 85.97& 76.48\\
\multicolumn{1}{c|}{} & HDDM-A& 85.62& 76.22& 85.83& 76.41\\
\multicolumn{1}{c|}{} & HDDM-W& 85.34& 76.44& 85.15& \textbf{76.73} \\
\multicolumn{1}{c|}{} & KSWIN & 84.53& 75.86& 82.64& 74.66\\
\multicolumn{1}{c|}{} & PH & 85.67& 76.41& 86.15& 76.57\\ \cline{2-6} 
\multicolumn{1}{c|}{} & PUDD-1& \textbf{\underline{86.35}} & \textbf{\underline{77.02}} & \textbf{86.30} & \textbf{76.85} \\
\multicolumn{1}{c|}{} & PUDD-3& \textbf{86.25} & \textbf{76.90} & \textbf{\underline{86.57}} & \textbf{\underline{76.86}} \\
\multicolumn{1}{c|}{} & PUDD-5& \textbf{86.19} & \textbf{76.83} & \textbf{86.19} & 76.29\\ \hline
\end{tabular}
\caption{Comparative analysis against classic drift detectors across 2 synthetic datasets. The top 3 results are highlighted in bold and the top 1 results are in both bold and underlined. PUDD-$x$ represents the threshold set as $10^{-x}$ for our method. }
\end{table}

\begin{table}[]
\centering
\begin{tabular}{l|ll}
\hline
\textbf{ddm name} & \textbf{sea10-I}                        & \textbf{sea20-I}                        \\ \hline
AMF               & 83.70                                   & 73.41                                   \\
IWE               & 84.73                                   & 74.33                                   \\
NS                & 84.39                                   & 76.00                                   \\
ADLTER            & 85.89                                   & 76.48                                   \\
MCD-DD            & 87.22                                   & 77.25                                   \\ \hline
PUDD-1            & \textbf{87.72}                          & \textbf{76.93}                          \\
PUDD-3            & \textbf{87.67}                          & \textbf{77.22}                          \\
PUDD-5            & \textbf{\underline{87.74}} & \textbf{\underline{77.32}} \\ \hline
\end{tabular}
\caption{Comparison with SOTA methods. The table shows that our methods PUDD in achieved top-1 in 2 out of 2 datasets, implying the effectiveness of PUDD compared with SOTA methods.}
\end{table}

\section{Observations of PUDD performance}
    \textbf{Observation 5: increment learning is generally better than training only once until alarm way} from Table 1. This observation aligns with the PAC theory \cite{pac}, which posits that the error of the classifier is inversely proportional to the number of training data. The reason is that this training approach allows classifiers to learn continuously and adapt to new data. This way, the classifiers produce prediction uncertainty that better reflects the current concept, leading to more accurate drift detection. The only exception is that DNN shows better performance on mixed datasets under the setting of trained only once until alarm way, e.g. when the threshold is set as $10^{-5}$, the test accuracy is 82.81\% for incremental training vs 84.90\% for the counterpart. The potential reason is the dataset contains both boolean and numeric datasets, which requires data preprocessing for DNN to conduct continual learning. However, our experiments aim to compare PUDD with baselines to verify its effectiveness, thus we did not specifically design data preprocessing for the dataset. Moreover, the overall experiment result confirms the advantage of incremental learning.

\textbf{Observation 6: our method demonstrates higher enhanced performance with DNNs compared to classic machine learning classifiers, e.g. GNB and VFDT}. For example, under the incremental training setting, our approach with DNNs surpasses the other two classifiers in 6 out of 8 datasets. In comparison, ADWIN combined with DNN only exceeds the performance of the other classifiers in 4 datasets. This observation suggests that the PU-index generated by DNNs provides more detailed insights than those from GNB and VFDT. Such granularity acts as a more informative indicator for drift detection, and our method, PUDD, capitalizes on this feature more effectively than the other baseline approaches.

\section{Critical difference diagrams of our experiments}

\begin{figure}[h]
    \centering
    \begin{subfigure}{\linewidth}
        \centering
        \includegraphics[width=\linewidth]{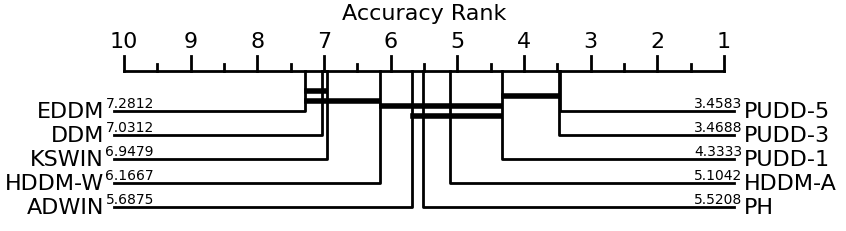}
        % \caption{Comparison of CD with one part CD. The threshold is set as $10^{-1}$.}
        \label{fig:sub1}
    \end{subfigure}

    \caption{Critical difference diagrams of the rank of PUDD and classic drift detectors. The figure shows that the PUDD with threshold $10^{-3}$ and $10^{-5}$  outperform all classic drift detectors and the result is statistically significant.}
\end{figure}

\begin{figure}[h]
    \centering
    \begin{subfigure}{\linewidth}
        \centering
        \includegraphics[width=\linewidth]{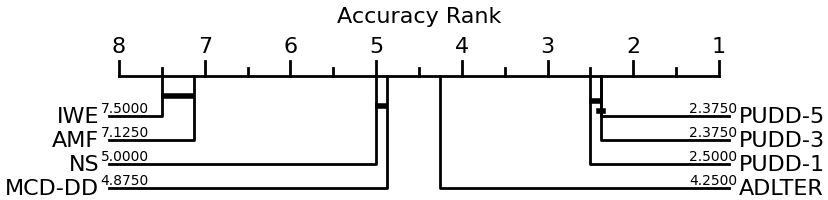}
        % \caption{Comparison of CD with one part CD. The threshold is set as $10^{-1}$.}
        \label{fig:sub2}
    \end{subfigure}

    \caption{Critical difference diagrams of the rank of PUDD and SOTA methods. The figure shows that the PUDD outperforms all SOTA methods and the result is statistically significant.}
\end{figure}

\begin{figure}[h]
    \centering
    \begin{subfigure}{\linewidth}
        \centering
        \includegraphics[width=\linewidth]{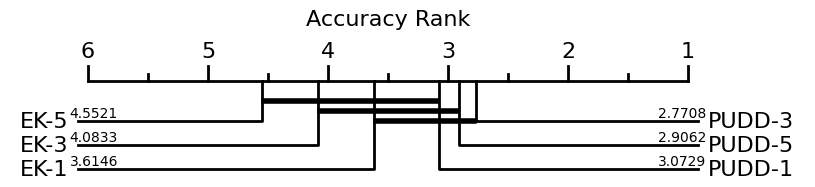}
        % \caption{Comparison of CD with one part CD. The threshold is set as $10^{-1}$.}
        \label{fig:sub3}
    \end{subfigure}

    \caption{Critical difference diagrams of the rank of PUDD (based on Adaptive PU-index bucketing) and EK (original Ei-kMeans partitioning). The suffix of each method indicates the logarithm of the threshold value. The figure shows that the improvement of the Adaptive PU-index Bucketing with threshold $10^{-3}$ and $10^{-5}$ are statistically significant.}

    \label{fig:twopart}
\end{figure}

\end{document}